\documentclass{article}

\usepackage[preprint]{neurips_2026}

\usepackage[utf8]{inputenc}
\usepackage[T1]{fontenc}
\usepackage{hyperref}
\usepackage{url}
\usepackage{booktabs}
\usepackage{amsfonts}
\usepackage{nicefrac}
\usepackage{microtype}
\usepackage{xcolor}
\usepackage{graphicx}
\usepackage{bm}
\usepackage{amsmath, amssymb, amsthm, mathtools}
\usepackage{cleveref}
\usepackage{thmtools}
\usepackage{thm-restate}
\usepackage{newtxtext,newtxmath}
\usepackage{titlesec}

\titlespacing*{\section}{0pt}{1.5ex plus 0.5ex minus .2ex}{1ex plus .2ex}
\titlespacing*{\subsection}{0pt}{1.2ex plus .5ex minus .2ex}{0.8ex plus .2ex}

\newtheorem{theorem}{Theorem}[section]

\newtheorem{proposition}[theorem]{Proposition}
\newtheorem{corollary}[theorem]{Corollary}
\newtheorem{definition}[theorem]{Definition}
\newtheorem{remark}[theorem]{Remark}

\crefname{theorem}{Theorem}{Theorems}
\crefname{lemma}{Lemma}{Lemmas}
\crefname{proposition}{Proposition}{Propositions}
\crefname{corollary}{Corollary}{Corollaries}
\crefname{definition}{Definition}{Definitions}
\crefname{remark}{Remark}{Remarks}
\crefname{assumption}{Assumption}{Assumptions}
\crefname{section}{Section}{Sections}
\crefname{equation}{Equation}{Equations}

\title{The Origin of Edge of Stability}

\author{%
  Elon Litman \\
  Stanford University\\
  \texttt{elonlit@stanford.edu}
}

\begin{document}

\maketitle

\begin{abstract}
Full-batch gradient descent on neural networks drives the largest Hessian eigenvalue to the threshold $2/\eta$, where $\eta$ is the learning rate. This phenomenon, the \emph{Edge of Stability}, has resisted a unified explanation: existing accounts establish self-regulation near the edge but do not explain why the trajectory is forced toward $2/\eta$ from arbitrary initialization. We introduce the \emph{edge coupling}, a functional on consecutive iterate pairs whose coefficient is uniquely fixed by the gradient-descent update. Differencing its criticality condition yields a step recurrence with stability boundary $2/\eta$, and a second-order expansion yields a loss-change formula whose telescoping sum forces curvature toward $2/\eta$. The two formulas involve different Hessian averages, but the mean value theorem localizes each to the true Hessian at an interior point of the step segment, yielding exact forcing of the Hessian eigenvalue with no gap. Setting both gradients of the edge coupling to zero classifies fixed points and period-two orbits; near a fixed point, the problem reduces to a function of the half-amplitude alone, which determines which directions support period-two orbits and on which side of the critical learning rate they appear.
\end{abstract}

%% ====================================================================
\section{Introduction}\label{sec:introduction}
%% ====================================================================

\paragraph{The Edge of Stability.}
Classical optimization theory guarantees monotonic loss decrease whenever $\eta < 2/\lambda_{\max}(\nabla^2 L)$, where $\lambda_{\max}(\nabla^2 L)$ is the largest Hessian eigenvalue, called the \emph{sharpness}~\citep{nesterov2004introductory,nocedal2006numerical}. \citet{cohen2021gradient} discovered that in practice the opposite occurs: when a fixed learning rate is used, the sharpness rises during training until it reaches the value $2/\eta$, at which point it saturates and the training loss begins to oscillate on short timescales while continuing to decrease on longer ones. They termed this the \emph{Edge of Stability} (EoS) and documented it across architectures, datasets, and loss functions. In the closely related \emph{catapult phase}~\citep{lewkowycz2020large}, large learning rates initially drive sharpness down before progressive sharpening returns it to $2/\eta$. Progressive sharpening had been studied by \citet{jastrzebski2017three}, \citet{wu2018sgd}, and \citet{ghorbani2019investigation}, who connected it to the Hessian spectrum and implicit selection of flat minima; \citet{lyu2020gradient} established a related bias toward margin maximization. Together, these observations established $2/\eta$ as a universal threshold of full-batch gradient descent. \autoref{fig:eos_motivation} illustrates both phases.

\begin{figure}[t]
\centering
\includegraphics[width=0.48\linewidth]{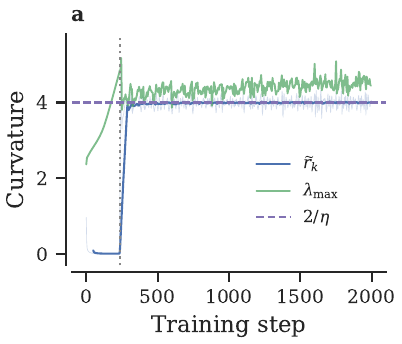}%
\hfill
\includegraphics[width=0.48\linewidth]{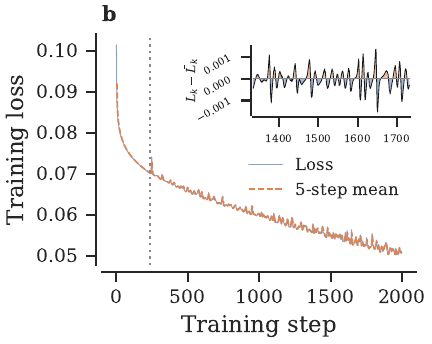}
\caption{\textbf{Edge of Stability on a 3-layer MLP (CIFAR-10).}
Full-batch GD, $\eta=0.5$, GELU activations, MSE loss.
Dotted line marks $t_c$, the first step at which $\widetilde{r}_k\approx 2/\eta$.
\textbf{a},~Effective curvature $\widetilde{r}_k$ (blue) and sharpness $\lambda_{\max}$ (green); both saturate near $2/\eta=4$ (dashed).
\textbf{b},~Training loss (solid) and 5-step running mean (dashed). Inset: detrended loss $L_k-\bar{L}_k$ showing oscillation.}
\label{fig:eos_motivation}
\end{figure}

\paragraph{Prior work.}
Several complementary lines of work have sought to explain why gradient descent saturates at $2/\eta$. One line shows that discrete gradient descent implicitly regularizes toward flat minima through a gradient-norm penalty~\citep{barrett2021implicit,smith2021origin,keskar2017large,hochreiter1997flat}, with related implicit biases in the stochastic, classification, and large-learning-rate settings~\citep{blanc2020implicit,lyu2020gradient,li2019towards}. A second line addresses local dynamics at the edge: \citet{damian2023selfstabilization} proved that cubic Taylor terms create a feedback loop reducing sharpness near $2/\eta$, and \citet{agarwala2022secondorder} and \citet{even2023sgd} obtained analogous results under spectral conditions. Further work has analyzed unstable convergence beyond the classical threshold~\citep{ahn2022understanding,arora2022understanding}, connected curvature to large-scale training instabilities~\citep{gilmer2022loss,lyu2022understanding}, and related convergence rate to sharpness~\citep{ma2022multiscale}.

\paragraph{The edge coupling.}
Despite this progress, existing results~\citep{damian2023selfstabilization,agarwala2022secondorder,even2023sgd} are inherently local: they establish self-regulation near the edge but do not explain why the trajectory is forced toward $2/\eta$ from arbitrary initialization. We take a different approach. Rather than analyzing the dynamics step by step, we ask: is there a scalar functional on consecutive iterate pairs whose criticality conditions encode gradient descent? Consider the family of symmetric couplings
\begin{align}
L(x) + L(y) - \frac{\alpha}{2}\|x-y\|^2.
\end{align}
Setting the $x$-gradient to zero gives $y = x - \alpha^{-1}\nabla L(x)$, which is the gradient-descent update if and only if $\alpha = \eta^{-1}$. We call the resulting functional
\begin{align}
\mathcal{A}_\eta(x,y) \;\triangleq\; L(x) + L(y) - \frac{1}{2\eta}\|x - y\|_2^2
\end{align}
the \emph{edge coupling}: it is attached to one edge of the discrete trajectory, and the threshold it produces governs the Edge of Stability. Viewed through the lens of Hamiltonian mechanics, this functional can be considered a discrete generating function whose criticality conditions encode gradient descent in boundary-value form~\citep{marsden2001discrete,arnol2013mathematical}.

\paragraph{Contributions.}
Every result in this paper follows from the criticality conditions of $\mathcal{A}_\eta$. Setting the $x$-gradient to zero recovers the gradient-descent update; differencing this condition between consecutive steps yields a recurrence with stability boundary $2/\eta$, and a second-order expansion yields a loss-change formula (\cref{sec:euler_action}). These two formulas involve different Hessian averages, but we show that the mean value theorem localizes each to the true Hessian at an interior point of the step segment. Summing the loss-change formula then forces the localized Hessian eigenvalue toward $2/\eta$ exactly (\cref{sec:eos}). Setting both partial gradients to zero classifies all fixed points and period-two orbits; near a fixed point, the problem reduces to a function of the half-amplitude alone, which determines which directions support period-two orbits and on which side of the critical learning rate they appear (\cref{sec:euler_action}). For two-layer linear networks, this construction is width-invariant and the period-doubling branch appears continuously on the large-learning-rate side of the threshold (\cref{prop:linear_Q}). \Cref{sec:edge_mechanisms} explains why the system remains stable at the edge. The appendices extend the forcing theorems to mini-batch SGD (\cref{app:sgd}), to pairs of trajectories (\cref{sec:discrete}), and to continuous time (\cref{sec:continuous}).
%% ====================================================================
\section{The Edge Coupling}\label{sec:euler_action}
%% ====================================================================

We now develop the consequences of the edge coupling $\mathcal{A}_\eta$, referring to consecutive pairs $(w_k, w_{k+1})$ as the \emph{edges} of the trajectory. Since a period-two orbit alternates symmetrically about a center, it is natural to write $(x,y) = (m-a,\, m+a)$, which gives
\begin{align}
\mathcal A_\eta(m-a,m+a)
=
2\Psi_\eta(m,a),
\qquad
\Psi_\eta(m,a)\triangleq\tfrac{1}{2}\bigl(L(m+a)+L(m-a)\bigr)-\tfrac{1}{\eta}\|a\|^2,
\end{align}
This reparametrization separates the center $m$ from the half-amplitude $a$ of the oscillation and will be central to the period-two analysis in \cref{thm:intrinsic_edge_potential}. The first theorem identifies the critical points of $\mathcal{A}_\eta$, and the second derives the dynamical consequences.

\begin{theorem}[Variational characterization of gradient descent]\label{thm:euler_action}
Let $L:\mathbb R^d\to\mathbb R$ be $C^2$, let $\eta>0$, and define
\begin{align}
\mathcal A_\eta(x,y)
\;\triangleq\;
L(x)+L(y)-\frac{1}{2\eta}\|x-y\|_2^2.
\end{align}
Write $\Gamma_\eta \triangleq \{(x,y)\in\mathbb R^d\times\mathbb R^d : \nabla_x\mathcal A_\eta(x,y)=0\}$ for the set of partial critical points in~$x$, and for a gradient-descent trajectory $w_{k+1}=w_k-\eta\nabla L(w_k)$ set $d_k \triangleq w_{k+1}-w_k$.

(i) The edge manifold $\Gamma_\eta$ coincides with the graph of the gradient-descent map:
\begin{align}
\Gamma_\eta
=
\{(x,y): y=x-\eta\nabla L(x)\}.
\end{align}
In particular, every consecutive pair $(w_k,w_{k+1})$ lies on $\Gamma_\eta$.

(ii) The $y$-gradient measures how far the current edge is from closing a period-two orbit. On a GD edge it gives the two-step displacement:
\begin{align}
-\eta\,\nabla_y\mathcal A_\eta(w_k,w_{k+1})
=
w_{k+2}-w_k
=
d_k+d_{k+1}.
\end{align}
Setting both partial gradients to zero therefore imposes the pair of update equations
\begin{align}
\nabla\mathcal A_\eta(x,y)=0
\quad\iff\quad
\begin{cases}
y=x-\eta\nabla L(x),\\
x=y-\eta\nabla L(y),
\end{cases}
\end{align}
so the full critical points of $\mathcal A_\eta$ are the fixed points $(x=y)$
and the period-two orbits $(x\neq y)$ of gradient descent.
\end{theorem}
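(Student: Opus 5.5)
The whole statement reduces to computing the two partial gradients of $\mathcal A_\eta$ explicitly and reading off the consequences. Since $L$ is $C^2$ (only $C^1$ is actually needed here), $\mathcal A_\eta$ is differentiable. Differentiating the quadratic term gives
\begin{align}
\nabla_x\mathcal A_\eta(x,y)=\nabla L(x)-\tfrac{1}{\eta}(x-y),
\qquad
\nabla_y\mathcal A_\eta(x,y)=\nabla L(y)+\tfrac{1}{\eta}(x-y).
\end{align}

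For part (i), I set the $x$-gradient to zero, which gives $x-y=\eta\nabla L(x)$, i.e.\ $y=x-\eta\nabla L(x)$. Each step is reversible, so this is an equality of sets and not merely an inclusion. The inclusion $(w_k,w_{k+1})\in\Gamma_\eta$ is then just the GD update rewritten in this form.

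For part (ii), I evaluate the $y$-gradient at $(w_k,w_{k+1})$ and multiply by $-\eta$:
\begin{align}
-\eta\nabla_y\mathcal A_\eta(w_k,w_{k+1})=-\eta\nabla L(w_{k+1})+(w_{k+1}-w_k).
\end{align}
By the update rule, $-\eta\nabla L(w_{k+1})=w_{k+2}-w_{k+1}=d_{k+1}$. The right-hand side is therefore $d_k+d_{k+1}=w_{k+2}-w_k$, since the sum telescopes.

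For the full critical points, I combine the two conditions. Vanishing of the $x$-gradient is the first update equation, exactly as in (i). Vanishing of the $y$-gradient gives $\eta\nabla L(y)=y-x$, i.e.\ $x=y-\eta\nabla L(y)$, which is the second update equation. So $(x,y)$ is critical if and only if $x\mapsto y\mapsto x$ under the GD map $T(w)=w-\eta\nabla L(w)$, i.e.\ $T(x)=y$ and $T(y)=x$, so that $T^2(x)=x$.

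It remains to split into cases. If $x=y$, then $T(x)=x$, so $x$ is a fixed point, equivalently $\nabla L(x)=0$. If $x\neq y$, then $\{x,y\}$ is an orbit of exact period two. Conversely, any fixed point $x$ gives the critical point $(x,x)$, and any period-two orbit gives the critical point $(x,T(x))$.

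There is no real obstacle; this is bookkeeping. The only point needing care is the sign convention in the $y$-derivative of $-\tfrac{1}{2\eta}\|x-y\|^2$, which equals $+\tfrac{1}{\eta}(x-y)$. That sign determines whether the resulting identity is the two-step displacement or its negative.
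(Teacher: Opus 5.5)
Your proposal is correct and follows the paper's proof: differentiate $\mathcal A_\eta$ directly, rearrange $\nabla_x\mathcal A_\eta=0$ into the gradient-descent update, and evaluate $\nabla_y\mathcal A_\eta$ on a GD edge to obtain $d_k+d_{k+1}$. Your explicit split into the cases $x=y$ (fixed points) and $x\neq y$ (period-two orbits), with the converse direction, spells out what the paper compresses into ``imposing both conditions.''
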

\noindent\emph{Proof sketch.} The result follows by direct differentiation: $\nabla_x\mathcal{A}_\eta = 0$ rearranges to the GD update, and evaluating $\nabla_y\mathcal{A}_\eta$ on a GD edge yields the two-step displacement. See \cref{app:proofs}.

We now extract the content of \cref{thm:euler_action}. Along the step segment $w_k+\tau d_k$ for $\tau\in[0,1]$, two Hessian averages arise naturally: a uniform average $\bar H_k$ from differencing gradients, and a triangularly weighted average $\widetilde H_k$ from expanding the loss.

\begin{theorem}[Propagator and One-Step Loss Change]\label{thm:propagator_loss}
In the setting of \cref{thm:euler_action}, define the step-averaged Hessians and effective curvature
\begin{align}
\bar H_k \triangleq \int_0^1 \nabla^2L(w_k+\tau d_k)\,d\tau &, \quad
\widetilde H_k \triangleq 2\int_0^1 (1-\tau)\nabla^2L(w_k+\tau d_k)\,d\tau, \\ \quad
&\widetilde r_k \triangleq \frac{d_k^\top \widetilde H_k d_k}{\|d_k\|_2^2}
\;\; (d_k\neq 0).
\end{align}

(i) Differencing the partial criticality condition $\nabla_x\mathcal{A}_\eta = 0$ between consecutive edges yields the step-increment propagator:
\begin{align}
d_{k+1} &= (I-\eta \bar H_k)\,d_k, \\
w_{k+2}-w_k &= (2I-\eta\bar H_k)\,d_k.
\end{align}
Two-step return $w_{k+2}=w_k$ therefore occurs if and only if $\bar H_k d_k = (2/\eta)\,d_k$.

(ii) A second-order expansion of $\mathcal{A}_\eta$ along a partial critical edge gives the one-step loss change:
\begin{align}
L(w_{k+1})-L(w_k)
&=
-\frac{1}{\eta}\|d_k\|_2^2+\frac12\,d_k^\top \widetilde H_k d_k
=
-\frac{\|d_k\|_2^2}{2\eta}\Bigl(2-\eta\widetilde r_k\Bigr).
\end{align}
Summing over $k = 0, \ldots, K{-}1$ telescopes to
\begin{align}
\sum_{k=0}^{K-1}\|d_k\|_2^2\!\left(\frac{2}{\eta}-\widetilde r_k\right)
=
2\bigl(L(w_0)-L(w_K)\bigr).
\end{align}
\end{theorem}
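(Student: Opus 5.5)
Both parts come from the fundamental theorem of calculus along the step segment $\gamma_k(\tau)=w_k+\tau d_k$, $\tau\in[0,1]$, together with the edge relation $d_k=-\eta\nabla L(w_k)$ from \cref{thm:euler_action}(i). Since $L$ is $C^2$, the Hessian is continuous along the segment, so both integrals defining $\bar H_k$ and $\widetilde H_k$ exist and are symmetric.

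For (i), I apply the integral form of the mean value theorem to the gradient:
\begin{align}
\nabla L(w_{k+1})-\nabla L(w_k)=\int_0^1 \nabla^2L(w_k+\tau d_k)\,d_k\,d\tau=\bar H_k d_k .
\end{align}
Both $(w_k,w_{k+1})$ and $(w_{k+1},w_{k+2})$ lie on $\Gamma_\eta$, so $d_k=-\eta\nabla L(w_k)$ and $d_{k+1}=-\eta\nabla L(w_{k+1})$. Subtracting these two criticality conditions gives
\begin{align}
d_{k+1}-d_k=-\eta\bar H_k d_k, \qquad\text{that is,}\qquad d_{k+1}=(I-\eta\bar H_k)d_k .
\end{align}
Adding $d_k$ to both sides yields $w_{k+2}-w_k=d_k+d_{k+1}=(2I-\eta\bar H_k)d_k$. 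This vanishes exactly when $\bar H_k d_k=(2/\eta)d_k$. The case $d_k=0$ is consistent: the relation holds trivially, and the iterate is then a fixed point.

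For (ii), I use Taylor's theorem with integral remainder, $f(1)=f(0)+f'(0)+\int_0^1(1-\tau)f''(\tau)\,d\tau$, applied to $f(\tau)=L(\gamma_k(\tau))$. This gives
\begin{align}
L(w_{k+1})-L(w_k)=\nabla L(w_k)^\top d_k+\int_0^1(1-\tau)\,d_k^\top\nabla^2L(w_k+\tau d_k)\,d_k\,d\tau .
\end{align}
The integral equals $\tfrac12 d_k^\top\widetilde H_k d_k$ by the definition of $\widetilde H_k$ with its factor of $2$. Substituting $\nabla L(w_k)=-d_k/\eta$ from the partial criticality condition turns the first term into $-\|d_k\|^2/\eta$. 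The two terms together give the first formula.

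To phrase this in terms of $\mathcal A_\eta$ as the statement suggests, I note that on $\Gamma_\eta$
\begin{align}
\mathcal A_\eta(w_k,w_{k+1})-2L(w_k)=L(w_{k+1})-L(w_k)-\tfrac{1}{2\eta}\|d_k\|^2 .
\end{align}
So the expansion of $\mathcal A_\eta$ along the edge is the same computation.

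For $d_k\neq0$, I factor out $\|d_k\|^2/(2\eta)$ using $d_k^\top\widetilde H_kd_k=\widetilde r_k\|d_k\|^2$. This gives $-\tfrac{\|d_k\|^2}{2\eta}(2-\eta\widetilde r_k)$, which equals $-\tfrac12\|d_k\|^2(2/\eta-\widetilde r_k)$. For $d_k=0$ both sides are zero, with the term interpreted as $0$ and $\widetilde r_k$ undefined.

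Summing over $k=0,\dots,K-1$ telescopes the left side to $L(w_K)-L(w_0)$. Multiplying by $-2$ then gives the stated identity.

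There is no real obstacle here; everything is calculus along a segment. The only points needing care are bookkeeping ones. First, the triangular weight $(1-\tau)$, rather than $\tau$, must come from expanding about the starting point $w_k$. Second, the normalization $2\int(1-\tau)\,d\tau=1$ is what makes $\widetilde H_k$ a genuine average. Third, the steps with $d_k=0$ in the sum have to be handled as above, since $\widetilde r_k$ is undefined there.
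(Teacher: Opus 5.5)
Your proof is correct and essentially the paper's: part (i) is the same differencing argument. For part (ii) the paper runs the same integral-remainder Taylor expansion on $f_k(\tau)=\mathcal A_\eta(w_k+\tau d_k,w_{k+1})$, so that $f_k'(0)=0$ is literally the criticality condition and the quadratic penalty supplies the $-\frac{1}{\eta}\|d_k\|_2^2$ term, whereas you expand $L$ directly and get that term from the linear part via $\nabla L(w_k)=-d_k/\eta$. Your side remark via $\mathcal A_\eta(w_k,w_{k+1})-2L(w_k)$ is not the paper's mechanism: it varies the second argument, whose first-order term does not vanish, but your argument does not need it.
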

\noindent\emph{Proof sketch.} Part~(i) differences $\nabla_x\mathcal{A}_\eta = 0$ between consecutive edges and applies the fundamental theorem of calculus. Part~(ii) uses the exact Taylor formula with integral remainder and vanishing linear term. See \cref{app:proofs}.

At the Edge of Stability, gradient descent approximately reverses its step at each iteration, producing near-periodic oscillations. We now analyze the period-two orbits that organize this behavior. A period-two orbit near a critical point $\bar w$ has the form $(\bar w - a,\, \bar w + a)$, but as $a$ grows the center shifts away from $\bar w$. The implicit function theorem lets us solve for the true center $m(a)$ and rewrite the problem in terms of the half-amplitude $a$ alone:
\begin{align}
\Phi_\eta(a) = \mathcal{P}(a) - \frac{1}{\eta}\|a\|^2,
\end{align}
where $\mathcal{P}$ depends only on $L$ and not on the learning rate.

\begin{theorem}[Center reduction and the edge eigenproblem]\label{thm:intrinsic_edge_potential}
Let $L$ be $C^4$ near a nondegenerate critical point $\bar w$, and write
$H \triangleq \nabla^2 L(\bar w)$.
There exist neighborhoods $U\ni 0$ and $V\ni \bar w$ and a unique smooth map
$m:U\to V$ such that
\begin{align}\label{eq:center_balance}
m(0)=\bar w,
\qquad
\nabla L(m(a)+a)+\nabla L(m(a)-a)=0
\qquad (a\in U).
\end{align}
The map $m$ is even: $m(-a)=m(a)$.

Define
\begin{align}\label{eq:edge_profile_def}
\mathcal P(a)
\;\triangleq\;
\frac12\Bigl(L(m(a)+a)+L(m(a)-a)\Bigr),
\qquad a\in U.
\end{align}
Then, for every $\eta>0$,
\begin{align}\label{eq:Phi_eta_edge_profile}
\Phi_\eta(a)
\;\triangleq\;
\Psi_\eta(m(a),a)
=
\mathcal P(a)-\frac{1}{\eta}\|a\|^2
\end{align}
is an even function of $a$. The gradient of $\mathcal{P}$ satisfies
\begin{align}\label{eq:edge_profile_gradient}
\nabla \mathcal P(a)
=
\frac12\Bigl(\nabla L(m(a)+a)-\nabla L(m(a)-a)\Bigr),
\end{align}
so the critical points of $\Phi_\eta$ satisfy a nonlinear eigenvalue equation: $a \neq 0$ is a critical point of $\Phi_\eta$ if and only if
\begin{align}\label{eq:nonlinear_eigenproblem}
\nabla \mathcal P(a)=\frac{2}{\eta}\,a.
\end{align}
Full criticality of $\mathcal{A}_\eta$ at $(m(a)-a,\, m(a)+a)$ is equivalent to this same equation. Nearby critical points of $\mathcal A_\eta$ are therefore in bijection with critical points of $\Phi_\eta$: $a=0$ gives the fixed point $(\bar w,\bar w)$, and $a\neq 0$ gives a period-two orbit $x_\eta = m(a)-a$, $y_\eta = m(a)+a$.

The Hessian of $\mathcal{P}$ at the origin is $\nabla^2\mathcal{P}(0) = H$, and therefore
\begin{align}\label{eq:hessian_reduced_profile}
\nabla^2 \Phi_\eta(0)=H-\frac{2}{\eta}I.
\end{align}
This becomes singular when $2/\eta$ enters the spectrum of $H$, which is therefore the spectral threshold for the birth of nontrivial period-two orbits.
\end{theorem}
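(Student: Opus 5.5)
The plan is to obtain $m$ from the implicit function theorem and then read every remaining claim off the balance equation \eqref{eq:center_balance}.

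\textbf{Existence, uniqueness and evenness of $m$.} Define $F(m,a)\triangleq \nabla L(m+a)+\nabla L(m-a)$, which is $C^3$ because $L\in C^4$. We have $F(\bar w,0)=2\nabla L(\bar w)=0$ and $\partial_m F(\bar w,0)=2H$. Since $\bar w$ is nondegenerate, $2H$ is invertible. The implicit function theorem therefore gives neighborhoods $U\ni 0$ and $V\ni\bar w$ and a unique $C^3$ map $m:U\to V$ with $m(0)=\bar w$ and $F(m(a),a)=0$.

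For evenness, shrink $U$ to be symmetric. Since $F(m,-a)=F(m,a)$, the point $m(-a)$ solves the same equation in $V$ as $m(a)$. Uniqueness then forces $m(-a)=m(a)$. Evenness of $\mathcal P$ and of $\Phi_\eta$ follows immediately, since the definitions are symmetric under $a\mapsto -a$ once $m$ is even.

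\textbf{Gradient of $\mathcal P$.} Differentiate $\mathcal P$ by the chain rule:
\begin{align*}
\nabla\mathcal P(a)=\tfrac12 Dm(a)^\top\bigl(\nabla L(m+a)+\nabla L(m-a)\bigr)+\tfrac12\bigl(\nabla L(m+a)-\nabla L(m-a)\bigr).
\end{align*}
The first term vanishes by \eqref{eq:center_balance}, which is an envelope-theorem argument and yields \eqref{eq:edge_profile_gradient}. It follows that $\nabla\Phi_\eta(a)=\nabla\mathcal P(a)-\frac{2}{\eta}a$, which gives \eqref{eq:nonlinear_eigenproblem}.

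\textbf{Equivalence with full criticality.} Take $x=m-a$ and $y=m+a$. Then
\begin{align*}
\nabla_x\mathcal A_\eta=\nabla L(m-a)+\tfrac{2}{\eta}a,\qquad \nabla_y\mathcal A_\eta=\nabla L(m+a)-\tfrac{2}{\eta}a.
\end{align*}
Their sum is the balance equation and their half-difference is $\nabla\mathcal P(a)-\frac{2}{\eta}a$. Both vanish if and only if balance holds and \eqref{eq:nonlinear_eigenproblem} holds. Along $m(a)$ balance holds automatically.

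Conversely, take a critical point $(x,y)$ near $(\bar w,\bar w)$ and set $m=(x+y)/2$, $a=(y-x)/2$. The sum condition is $F(m,a)=0$, so uniqueness in $V$ gives $m=m(a)$. This is the claimed bijection, with \cref{thm:euler_action}(ii) identifying $a=0$ with the fixed point and $a\neq0$ with a period-two orbit.

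\textbf{Hessian at the origin.} Differentiate \eqref{eq:edge_profile_gradient}:
\begin{align*}
\nabla^2\mathcal P(a)=\tfrac12\bigl[\nabla^2L(m+a)(Dm+I)-\nabla^2L(m-a)(Dm-I)\bigr].
\end{align*}
Because $m$ is even, $Dm(0)=0$. At $a=0$ this gives $\nabla^2\mathcal P(0)=\tfrac12(H+H)=H$, and hence $\nabla^2\Phi_\eta(0)=H-\frac{2}{\eta}I$. The singularity statement is then immediate: by the implicit function theorem, nonzero solutions of $\nabla\Phi_\eta=0$ bifurcating from $a=0$ require $\nabla^2\Phi_\eta(0)$ to be singular, that is, $2/\eta\in\operatorname{spec}(H)$.

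\textbf{Main obstacle.} The delicate step is the bookkeeping of neighborhoods in the bijection. The converse must ensure that any critical pair near the diagonal has midpoint in $V$ and half-difference in $U$, so that uniqueness of $m$ applies. I would handle this by taking the neighborhood of critical pairs to be the preimage of $V\times U$ under the midpoint/half-difference map, after shrinking $U$ to be symmetric.
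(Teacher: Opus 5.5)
Your proposal is correct and follows essentially the same route as the paper. Both apply the implicit function theorem to the center-balance map: you get $\partial_m F=2H$, while the paper normalizes $F$ by $\tfrac12$ and gets $H$. Both then obtain evenness from uniqueness, the envelope cancellation $D_mG=F=0$ for $\nabla\mathcal P$, and $Dm(0)=0$ for $\nabla^2\mathcal P(0)=H$. Where you are slightly more careful than the paper is in the sum/half-difference check of $\nabla_x\mathcal A_\eta,\nabla_y\mathcal A_\eta$ (the paper instead argues through $m$-criticality of $\Psi_\eta$), the converse direction of the bijection with a symmetric $U$, and the implicit-function justification of the threshold claim.
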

\noindent\emph{Proof sketch.} The implicit function theorem applied to the center-balance equation yields the center map $m(a)$; the marginal formula and nonlinear eigenvalue equation follow from differentiating the reduced functional. See \cref{app:proofs}.

\noindent\emph{Remark on nondegeneracy.} Deep learning landscapes typically have degenerate minima forming Morse--Bott manifolds (due to overparameterization symmetries). The theorem applies to such settings by restricting to the normal space of the minimum manifold, as we demonstrate for two-layer linear networks in \cref{prop:linear_Q} and \cref{app:linear_Q}.

To use \cref{thm:intrinsic_edge_potential} we need the Taylor expansion of $\mathcal{P}$. Its quartic term $\mathcal Q$ determines whether period-two orbits appear.

\begin{proposition}[Quartic expansion near a fixed point]\label{prop:quartic_jet}
In the setting of \cref{thm:intrinsic_edge_potential}, the center map has expansion
\begin{align}\label{eq:center_map_intrinsic}
m(a)
=
\bar w
-\frac12\,H^{-1}\nabla^3L(\bar w)[a,a,\cdot]
+O(\|a\|^4),
\end{align}
and $\mathcal P$ has expansion
\begin{align}\label{eq:edge_profile_expansion}
\mathcal P(a)
=
L(\bar w)
+\frac12\langle H a,a\rangle
+\frac14\,\mathcal Q(a)
+o(\|a\|^4),
\end{align}
where
\begin{align}\label{eq:Q_fullspace}
\mathcal Q(a)
\;\triangleq\;
\frac16\,\nabla^4L(\bar w)[a,a,a,a]
-\frac12\Bigl\langle
\nabla^3L(\bar w)[a,a,\cdot],\,
H^{-1}\nabla^3L(\bar w)[a,a,\cdot]
\Bigr\rangle .
\end{align}
Consequently,
\begin{align}\label{eq:Phi_eta_expansion_intrinsic}
\Phi_\eta(a)
=
L(\bar w)
+\frac12\Bigl\langle\Bigl(H-\frac{2}{\eta}I\Bigr)a,a\Bigr\rangle
+\frac14\,\mathcal Q(a)
+o(\|a\|^4).
\end{align}
\end{proposition}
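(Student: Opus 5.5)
We expand the center-balance equation \eqref{eq:center_balance} in powers of $a$, then substitute the resulting expansion of $m(a)$ into $\mathcal P$. Write $m(a)=\bar w+\delta(a)$, with $\delta$ smooth, $\delta(0)=0$, and $\delta$ even by \cref{thm:intrinsic_edge_potential}. Evenness kills all odd-order terms of $\delta$. In particular $D\delta(0)=0$, so $\delta(a)=\delta_2(a)+O(\|a\|^4)$ with $\delta_2$ quadratic; since $L\in C^4$, $\delta$ is $C^3$, and the remainder is $O(\|a\|^4)$ up to the regularity we actually use. Taylor-expand $\nabla L(\bar w+\delta\pm a)$ to third order about $\bar w$, using $\nabla L(\bar w)=0$:
\begin{align}
\nabla L(\bar w+\delta\pm a)=H(\delta\pm a)+\tfrac12\nabla^3L(\bar w)[\delta\pm a,\delta\pm a,\cdot]+\tfrac16\nabla^4L(\bar w)[(\delta\pm a)^{3},\cdot]+o(\|a\|^3).
\end{align}
Summing the $\pm$ versions, the odd terms in $a$ cancel, and the balance equation becomes $2H\delta+\nabla^3L(\bar w)[a,a,\cdot]+O(\|a\|^4)=0$. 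The terms $\nabla^3L[\delta,\delta]$ and $\nabla^3L[\delta,a]$ are $O(\|a\|^4)$ and $O(\|a\|^3)$ respectively, but the latter is odd in $a$ and cancels in the sum. Nondegeneracy lets us invert $H$, which gives \eqref{eq:center_map_intrinsic}.

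For $\mathcal P$, we Taylor-expand $L(\bar w+\delta\pm a)$ to fourth order about $\bar w$ and average over $\pm$. Only even powers of $a$ survive:
\begin{align}
\mathcal P(a)=L(\bar w)+\tfrac12\langle H\delta,\delta\rangle+\tfrac12\langle Ha,a\rangle+\tfrac12\nabla^3L(\bar w)[\delta,a,a]+\tfrac1{24}\nabla^4L(\bar w)[a,a,a,a]+o(\|a\|^4).
\end{align}
Here $\tfrac16\nabla^3L[\delta+a]^3$ averages to $\tfrac16(3\nabla^3L[\delta,a,a]+\nabla^3L[\delta,\delta,\delta])$, and $\nabla^3L[\delta,\delta,\delta]=O(\|a\|^6)$ is dropped. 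The remaining quartic-order terms, such as $\nabla^4L[\delta,a,a,a]$, are of order $\|a\|^5$ or higher or are odd. The linear term $\langle\nabla L(\bar w),\delta\rangle$ vanishes.

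Now substitute $\delta=-\tfrac12H^{-1}b$ with $b\triangleq\nabla^3L(\bar w)[a,a,\cdot]$. Then $\tfrac12\langle H\delta,\delta\rangle=\tfrac18\langle b,H^{-1}b\rangle$ and $\tfrac12\langle\delta,b\rangle=-\tfrac14\langle b,H^{-1}b\rangle$. These sum to $-\tfrac18\langle b,H^{-1}b\rangle$, so
\begin{align}
\tfrac14\mathcal Q(a)=\tfrac1{24}\nabla^4L[a^4]-\tfrac18\langle b,H^{-1}b\rangle,
\end{align}
which matches \eqref{eq:Q_fullspace}. As a shortcut one could instead invoke the envelope identity: $\delta$ is a critical point of $\delta\mapsto\tfrac12(L(\bar w+\delta+a)+L(\bar w+\delta-a))$, so errors of order $\|a\|^4$ in $\delta$ affect $\mathcal P$ only at order $\|a\|^6$. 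Finally, \eqref{eq:Phi_eta_expansion_intrinsic} follows by subtracting $\tfrac1\eta\|a\|^2$, as in \eqref{eq:Phi_eta_edge_profile}.

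The main obstacle is the bookkeeping of remainder orders: showing that the $O(\|a\|^4)$ error in $m$ only contributes $o(\|a\|^4)$ to $\mathcal P$, and that cross terms like $\nabla^3L[\delta,\delta,\cdot]$ are negligible. The first is handled by the envelope (criticality) observation above. The second follows from $\delta=O(\|a\|^2)$, which itself comes from evenness together with the implicit function theorem.
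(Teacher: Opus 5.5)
Your proposal is correct and follows essentially the same route as the paper. You read off the quadratic term of the even center map from the summed balance equation, whereas the paper gets the same $D^2m(0)[h,h]=-H^{-1}\nabla^3L(\bar w)[h,h,\cdot]$ by differentiating $F(m(a),a)=0$ twice at $a=0$. Both of you then expand $L(\bar w+\delta\pm a)$ to fourth order, average, and substitute to obtain $\tfrac1{24}\nabla^4L(\bar w)[a,a,a,a]-\tfrac18\langle b,H^{-1}b\rangle=\tfrac14\mathcal Q(a)$.

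Your hedge on the center-map remainder is warranted. With only $L\in C^4$ the honest remainder is $o(\|a\|^3)$ rather than $O(\|a\|^4)$, and the paper glosses over this as well. The conclusion is unaffected: as your envelope remark shows, even an $o(\|a\|^2)$ error in $\delta$ changes $\mathcal P$ only at $o(\|a\|^4)$.
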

\noindent\emph{Proof sketch.} Differentiating the center-balance equation twice at $a = 0$ determines the leading term of $m(a)$, and substituting into the Taylor expansion of $\mathcal{P}$ produces $\mathcal Q$. See \cref{app:proofs}.

With this expansion in hand, the bifurcation problem reduces to finding nontrivial solutions of $\nabla \mathcal{P}(a) = (2/\eta)a$. Near the critical learning rate $\eta_c$, any small solution $a$ must lie close to the kernel $E_c = \ker(H - (2/\eta_c)I)$. The quartic term $\mathcal Q$, restricted to the unit sphere $S(E_c)$, then determines which directions in $E_c$ support bifurcating branches and on which side of $\eta_c$ they appear~\citep{kielhofer2012bifurcation,golubitsky1985singularities}.

\begin{corollary}[Generic branching at the edge]\label{cor:generic_edge_branches}
Fix $\eta_c>0$ and let
\begin{align}
E_c \triangleq \ker\!\Bigl(H-\frac{2}{\eta_c}I\Bigr).
\end{align}
Assume $u\in S(E_c)$ is a nondegenerate critical point of
$\mathcal Q|_{S(E_c)}$ and that $\mathcal Q(u)\neq 0$.
Then there exists a unique local branch of nontrivial period-two
orbits, unique up to swapping the two points, with amplitude
\begin{align}
a(\eta)=\alpha(\eta)\,u+o\!\bigl(\sqrt{|\eta-\eta_c|}\bigr),
\end{align}
where
\begin{align}
\alpha(\eta)^2
=
\frac{\frac{2}{\eta}-\frac{2}{\eta_c}}{\mathcal Q(u)}
+o(|\eta-\eta_c|).
\end{align}
The branch exists on the side
\begin{align}
\Bigl(\frac{2}{\eta}-\frac{2}{\eta_c}\Bigr)\mathcal Q(u)>0.
\end{align}
Moreover, every sufficiently small period-two orbit is
tangent to a critical direction of $\mathcal Q|_{S(E_c)}$.

In particular, when $\dim E_c=1$, this reduces to the scalar
amplitude scaling.
\end{corollary}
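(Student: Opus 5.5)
We reduce to a finite-dimensional equivariant bifurcation problem via Lyapunov--Schmidt on the eigenproblem of \cref{thm:intrinsic_edge_potential}. Set $\mu \triangleq \frac{2}{\eta}-\frac{2}{\eta_c}$ and consider
\begin{align}
F(a,\mu)\triangleq\nabla\Phi_\eta(a)=\nabla\mathcal P(a)-\Bigl(\frac{2}{\eta_c}+\mu\Bigr)a .
\end{align}
By \cref{thm:intrinsic_edge_potential}, nontrivial zeros of $F$ are exactly the nearby period-two orbits. Since $\Phi_\eta$ is even, $F$ is odd in $a$. Write $a=v+w$ with $v\in E_c$ and $w\in E_c^\perp$, and let $\Pi$ be the orthogonal projection onto $E_c$. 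Because $H-\frac{2}{\eta_c}I$ is invertible on $E_c^\perp$, the implicit function theorem solves $(I-\Pi)F(v+w,\mu)=0$ for a unique $w=w(v,\mu)$, which is odd in $v$. From the cubic remainder of \cref{prop:quartic_jet} we get $w=O(\|v\|^3)$.

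The problem is variational, so the reduced equation is the gradient of
\begin{align}
\phi(v,\mu)\triangleq\Phi_\eta(v+w(v,\mu)).
\end{align}
By \cref{prop:quartic_jet}, and because the $w$-correction enters only at order $\|v\|^6$,
\begin{align}
\phi(v,\mu)=L(\bar w)-\tfrac{\mu}{2}\|v\|^2+\tfrac14\mathcal Q(v)+o(\|v\|^4)+O(|\mu|\,\|v\|^4).
\end{align}
We pass to polar coordinates $v=\alpha\theta$ with $\theta\in S(E_c)$ and $\alpha\geq 0$. The critical-point equations of $\phi$ split into a radial part and a tangential part. After dividing by $\alpha$, the radial equation is
\begin{align}
-\mu+\mathcal Q(\theta)\alpha^2+o(\alpha^2)=0 ,
\end{align}
using $\langle\nabla\mathcal Q(\theta),\theta\rangle=4\mathcal Q(\theta)$. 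After dividing by $\alpha^3$, the tangential equation is
\begin{align}
\nabla_{S}\mathcal Q(\theta)+o(1)=0 .
\end{align}

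For existence, substitute $\alpha^2=s$ and regard the two equations as a map $G(\theta,s,\mu)$. At $(\theta,s,\mu)=(u,\mu/\mathcal Q(u),0)$ the partial Jacobian in $(\theta,s)$ is block-triangular. Its diagonal blocks are the Riemannian Hessian of $\mathcal Q|_{S(E_c)}$ at $u$, which is invertible by the nondegeneracy hypothesis, and $\mathcal Q(u)\neq0$. The implicit function theorem then gives unique $\theta(\mu)\to u$ and $s(\mu)=\mu/\mathcal Q(u)+o(\mu)$. Such a branch is admissible only when $s>0$, that is, when $\mu\mathcal Q(u)>0$.

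The resulting amplitude is
\begin{align}
a=\alpha u+o(\alpha)+O(\alpha^3), \qquad \alpha=\sqrt{s}=O\bigl(\sqrt{|\eta-\eta_c|}\bigr).
\end{align}
Converting from $\mu$ to $\eta$ only rescales the $o(\cdot)$ terms. The sign $\alpha\mapsto-\alpha$ is exactly the swap $a\mapsto -a$, which exchanges the two points of the orbit because $m$ is even. This gives uniqueness up to swapping.

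For the tangency claim, suppose $a$ is any small nontrivial zero of $F$. Its $E_c$ component $v$ must be nonzero, since otherwise $w=w(0,\mu)=0$ and $a=0$. Then the tangential equation forces $\nabla_S\mathcal Q(\theta)\to 0$ as $a\to0$, so every limit direction of $\theta$ is a critical direction of $\mathcal Q|_{S(E_c)}$. When $\dim E_c=1$ the sphere consists of two points, the tangential equation is vacuous, and the radial equation alone gives the scalar law $\alpha^2=\mu/\mathcal Q(u)$.

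The main obstacle is justifying the polar blow-up rigorously. The error terms must be $C^1$-small after division by $\alpha$ and $\alpha^3$, so that the implicit function theorem applies uniformly down to $\alpha=0$. This needs $L\in C^4$ together with a careful remainder estimate for $\nabla\phi$, not only for $\phi$. I would first try writing the remainders via integral Taylor formulas in $(\alpha,\theta)$, exploiting the oddness of $F$, and check that the divided map extends continuously (and $C^1$ in $\theta,s$) to $s=0$. If that regularity fails, the fallback is a degree or Lusternik--Schnirelmann argument on the sphere for existence, but this would lose uniqueness.
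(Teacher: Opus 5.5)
Your proposal is correct and follows essentially the same route as the paper. Both arguments perform a Lyapunov--Schmidt reduction onto $E_c$ and work with the reduced potential $-\tfrac{\mu}{2}\|v\|^2+\tfrac14\mathcal Q(v)$ (the paper uses the opposite sign convention for $\mu$). Both then split the reduced gradient in polar coordinates: radially, Euler's identity gives the amplitude law, and tangentially, nondegeneracy of $u$ on $S(E_c)$ plus the implicit function theorem gives the direction branch.

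Your bookkeeping is slightly sharper than the paper's. You get $w=O(\|v\|^3)$ rather than $O(\|v\|^3+|\mu|\,\|v\|)$, so no $\mu$-cross term enters the tangential equation.

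The regularity issue you flag at the end is real, and the paper leaves it implicit too. Under $C^4$, the tangential equation divided by $\alpha^3$ need not be $C^1$ in $s=\alpha^2$ at $s=0$. The cleanest fix is to rescale $v=\sqrt{|\mu|}\,z$ and apply the implicit function theorem in $z$, which needs only continuous dependence on $\sqrt{|\mu|}$.
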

\noindent\emph{Proof sketch.} A Lyapunov--Schmidt reduction decomposes the amplitude into $E_c$ and $E_c^\perp$ components; the radial projection of the reduced gradient equation yields the amplitude scaling. See \cref{app:proofs}.

\begin{proposition}[Transverse edge normal form for two-layer linear networks]\label{prop:linear_Q}
For $L_h(W_1,W_2)=\frac12\|W_2W_1-M\|_F^2$ with $W_1\in\mathbb{R}^{h\times d}$, $W_2\in\mathbb{R}^{p\times h}$, and $\operatorname{rank}(M)=r\le h$, the minimum set is a Morse--Bott manifold. The Hessian has a nontrivial kernel corresponding to reparametrization symmetries, so we restrict to the orthogonal complement $\mathcal N = (\ker\nabla^2 L_h(\bar w))^\perp$. The resulting transverse edge theory is \emph{width-invariant}: overparameterization adds only flat directions and leaves the restricted loss unchanged. If $\sigma_1>\sigma_2$, then
\begin{align}
\Phi_{\eta}^{\perp}(t u_c)
\;=\;
L_h(\bar w)+\bigl(\sigma_1-\tfrac1\eta\bigr)t^2-t^4+o(t^4),
\end{align}
so the first period-doubling occurs for $\eta>\eta_c=1/\sigma_1$ and the branch emerges continuously from zero at $\eta_c$ for every $h\ge r$ (\autoref{fig:bifurcation}; proof in \cref{app:linear_Q}).
\end{proposition}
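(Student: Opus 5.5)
The plan is to reduce everything to an explicit balanced minimizer written in the SVD basis of $M$. On that minimizer the transverse theory can be computed in closed form along the top mode. Write $M=\sum_{i\le r}\sigma_i u_iv_i^\top$ with $\sigma_1>\sigma_2\ge\dots$. Take the balanced minimizer $\bar W_1=\sum_{i\le r}\sqrt{\sigma_i}\,e_iv_i^\top$ and $\bar W_2=\sum_{i\le r}\sqrt{\sigma_i}\,u_ie_i^\top$, padded with zero rows and columns when $h>r$.

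\textbf{Minimum set and Hessian.} The minimum set is $\{W_2W_1=M\}$, which is the fibre of the product map over $M$, a smooth manifold on which $GL$-type reparametrizations act. Since the residual vanishes there, the Hessian is pure Gauss--Newton:
\begin{align}
\nabla^2L_h(\bar w)[(A,B),(A,B)]=\|\bar W_2A+B\bar W_1\|_F^2 .
\end{align}
Its kernel is $\ker J$ with $J(A,B)=\bar W_2A+B\bar W_1$, and it contains the tangent space of the minimum set, which is what makes the set Morse--Bott.

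\textbf{Width invariance.} With the padded minimizer, any perturbation supported on the extra rows of $W_1$ or the extra columns of $W_2$ is annihilated by $J$. Indeed, such rows of $A$ multiply zero columns of $\bar W_2$, and such columns of $B$ multiply zero rows of $\bar W_1$. Hence $\mathcal N=(\ker\nabla^2L_h)^\perp$ lies inside the width-$r$ coordinates. The restriction of $L_h$ to $\bar w+\mathcal N$ is then literally the width-$r$ loss restricted to its own normal slice, so $\Phi^\perp_\eta$ is independent of $h\ge r$.

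\textbf{Top mode.} Let $u_c$ be the unit vector with $A=\tfrac1{\sqrt2}e_1v_1^\top$ and $B=\tfrac1{\sqrt2}u_1e_1^\top$. Then $Ju_c=\sqrt{2\sigma_1}\,u_1v_1^\top$, so $u_c$ is a Hessian eigenvector with eigenvalue $2\sigma_1$, the largest one because $\sigma_1>\sigma_2$. This gives $2/\eta_c=2\sigma_1$, i.e.\ $\eta_c=1/\sigma_1$, and by the quadratic term of \cref{prop:quartic_jet} it accounts for the $(\sigma_1-\tfrac1\eta)t^2$ term.

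\textbf{Invariant slice.} For the quartic term I would avoid evaluating $\nabla^3L$, $H^{-1}$ and the general formula for $\mathcal Q$. Instead I would compute $\mathcal P$ exactly on an invariant slice. Take the two-dimensional slice $S$ where only the $(1,1)$ coordinates move: $W_1=\bar W_1+x\,e_1v_1^\top$ and $W_2=\bar W_2+y\,u_1e_1^\top$. On $S$ the residual is $R=\bigl((p+x)(p+y)-p^2\bigr)u_1v_1^\top$, where $p=\sqrt{\sigma_1}$. The gradients $W_2^\top R$ and $RW_1^\top$ are supported on the same $(1,1)$ coordinates, so $\nabla L$ is tangent to $S$. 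Moreover $S$ is invariant under the swap $x\leftrightarrow y$, which preserves $L$. For $a=\alpha(1,1)$, the unique center map of \cref{thm:intrinsic_edge_potential} must therefore take the form $m(a)=\mu(1,1)$ in $S$.

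\textbf{Closed-form reduction.} Solve the center-balance equation~\eqref{eq:center_balance} explicitly with $q=p+\mu$. Along the diagonal, $\partial_x L=\bigl((s+p)^2-p^2\bigr)(s+p)$, and balance reduces to $2q(q^2+3\alpha^2-p^2)=0$, so $q^2=p^2-3\alpha^2$. The two residuals become $-2\alpha^2\pm2q\alpha$, which gives
\begin{align}
\mathcal P=\tfrac14\bigl[(-2\alpha^2+2q\alpha)^2+(-2\alpha^2-2q\alpha)^2\bigr]=2\alpha^2(p^2-2\alpha^2).
\end{align}
With $t^2=\|a\|^2=2\alpha^2$ this reads $\mathcal P(tu_c)=L_h(\bar w)+\sigma_1t^2-t^4$, an exact identity. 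Subtracting $t^2/\eta$ yields the claimed normal form, equivalently $\mathcal Q(u_c)=-4$.

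\textbf{Consequences.} Plugging into \cref{cor:generic_edge_branches} with $\dim E_c=1$ gives $\alpha^2\propto(\tfrac1{\eta_c}-\tfrac1\eta)$. The branch is therefore born continuously at $\eta_c$ and lies on the side $\eta>\eta_c$, for every $h\ge r$.

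\textbf{Main obstacle.} The delicate step is justifying that the reduced problem really lives in the normal slice $\mathcal N$ and that \cref{thm:intrinsic_edge_potential} applies despite degeneracy. Concretely, one has to check two things. First, the center map defined with $m$ constrained to $\bar w+\mathcal N$ exists uniquely, because $H|_{\mathcal N}$ is invertible. Second, the slice $S$ lies in $\bar w+\mathcal N$ and is invariant, so that uniqueness forces $m$ into $S$. I would handle this by applying the implicit function theorem to the $\mathcal N$-projected balance equation, and then verifying that the exact solution found on $S$ satisfies it.
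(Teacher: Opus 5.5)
Your computation of $\mathcal P(tu_c)$ is correct, and in fact exact. Two claims are not justified as written, and a third is asserted without proof.

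\textbf{Morse--Bott.} You argue that $\ker\nabla^2L_h(\bar w)$ contains the tangent space of the minimum set. That inclusion holds automatically for any manifold of minima. Morse--Bott requires the reverse inclusion, i.e.\ equality. You also assume without proof that the fibre $\{W_2W_1=M\}$ is smooth. The paper obtains equality from $\operatorname{im}DF_h(\bar w)=T_M\mathcal R_r$, so $\operatorname{rank}DF_h(\bar w)=r(p+d-r)$, together with the constant-rank theorem. That argument is sound for $h=r$, because near $\bar w$ both factors keep rank $r$ and $\operatorname{rank}DF_h\equiv r(p+d-r)$. For $h>r$ with $p,d>r$, however, no argument can work, because the claim fails. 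In the paper's coordinates, let $\delta W_1$ and $\delta W_2$ be the matrix units at position $(r{+}1,r{+}1)$. This direction lies in $\ker DF_h(\bar w)$. But row $r{+}1$ of $\bar W_2$ and column $r{+}1$ of $\bar W_1$ vanish, so along any $C^1$ curve with this tangent the $(r{+}1,r{+}1)$ entry of $W_2W_1$ equals $s^2+o(s^2)\neq 0$. This kernel direction is therefore not tangent to the minimum set. Restrict the Morse--Bott assertion to $h=r$. The transverse computation does not depend on it, because \cref{thm:intrinsic_edge_potential} is applied to $L_h$ restricted to $\bar w+\mathcal N$, where the Hessian is nondegenerate regardless.

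\textbf{The slice.} $S$ does not lie in $\bar w+\mathcal N$. Its antidiagonal $(e_1v_1^\top,-u_1e_1^\top)$ is the infinitesimal rescaling $W_1\mapsto(I+\epsilon e_1e_1^\top)W_1$, $W_2\mapsto W_2(I-\epsilon e_1e_1^\top)$, so it lies in $\ker\nabla^2L_h(\bar w)$. Correspondingly, $L|_S$ is degenerate at the origin, and there is no uniqueness inside $S$ to invoke. The repair is to use only the diagonal line $\{\bar w+su_c\}$.
\begin{itemize}
\item It lies in $\bar w+\mathcal N$, because $u_c\propto J^\ast(u_1v_1^\top)$.
\item It is gradient-invariant by your own formula: at $x=y=s$ the gradient is $c(s)\,(e_1v_1^\top,u_1e_1^\top)$ with $c(s)=((p+s)^2-p^2)(p+s)$.
\item Your explicit $m(a)$ lies on this line, solves the unprojected (hence the $\mathcal N$-projected) balance equation, and tends to $\bar w$.
\end{itemize}
Uniqueness from the implicit function theorem on $\bar w+\mathcal N$ then identifies $m(a)$ as the center map. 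Everything downstream stands.

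\textbf{Comparison.} The paper instead restricts $L_h$ to this line, getting $\ell(t)=\tfrac12(\sqrt{2\sigma_1}\,t+\tfrac12t^2)^2$. It then inserts $\ell'''(0)=3\sqrt{2\sigma_1}$ and $\ell''''(0)=3$ into the formula of \cref{prop:quartic_jet}, which gives $\mathcal Q^\perp(u_c)=-4$. That one-dimensional substitution tacitly needs $\nabla^3L_h(\bar w)[u_c,u_c,\cdot]\parallel u_c$, which is exactly the line invariance you exploit.

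Your route makes this explicit and gives more than an $o(t^4)$ expansion. It yields the identity $\mathcal P(tu_c)=L_h(\bar w)+\sigma_1t^2-t^4$ and, by the same invariance, the exact branch $t^2=\tfrac12(\sigma_1-\tfrac1\eta)$. The paper's route is shorter and plugs directly into \cref{cor:generic_edge_branches}. Your width-invariance argument (extra hidden rows and columns lie in $\ker J$) is a leaner version of the paper's explicit $\mathcal N$ and zero-padding isometry.

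\textbf{Top eigenvalue.} You assert, rather than show, that $2\sigma_1$ is the \emph{simple top} eigenvalue of $H^\perp$. You need this for $\eta_c$ to be the first threshold and for $\dim E_c=1$. The paper supplies it through the transverse spectrum $\{\sigma_i+\sigma_j\}\cup\{\sigma_i\}$.
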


\begin{figure}[t]
\centering
\includegraphics[width=0.48\linewidth]{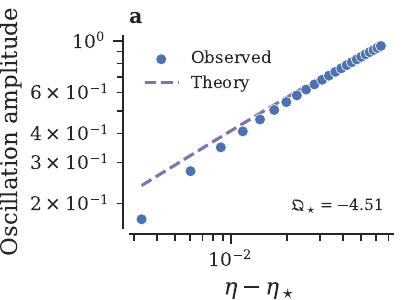}%
\hfill
\includegraphics[width=0.48\linewidth]{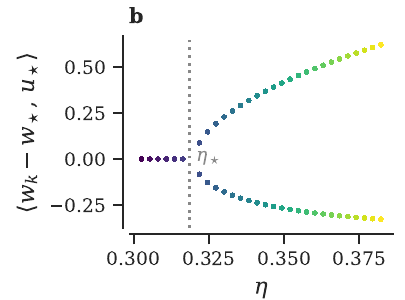}
\caption{\textbf{Continuous onset of period-doubling in a two-layer linear network (\cref{prop:linear_Q}).}
$p\!=\!5$, $h\!=\!3$, $d\!=\!10$, rank-3 target, $n=200$ samples.
\textbf{a},~Period-two amplitude $2\|a(\eta)\|$ vs.\ $\eta - \eta_c$ (log-log).
Observed amplitude (dots) tracks the $\sqrt{\eta - \eta_c}$ scaling predicted by \cref{cor:generic_edge_branches} (dashed);
$\mathcal Q^\perp(u_c) < 0$ shows that the branch appears for $\eta>\eta_c$.
\textbf{b},~Pitchfork diagram: projection $\langle w_k-\bar w,u_c\rangle$ vs.\ $\eta$. Branches emerge continuously at $\eta_c$, confirming that period-doubling begins for $\eta>\eta_c$.}
\label{fig:bifurcation}
\end{figure}

%% ====================================================================
\section{Why {\boldmath$2/\eta$} Appears Everywhere}\label{sec:selfstrain}
%% ====================================================================

\paragraph{Two step-averaged Hessians.}
\Cref{thm:propagator_loss} introduced two Hessian averages along each step segment: a uniform average $\bar H_k$ and a triangularly weighted average $\widetilde H_k$. These arise inevitably from the fundamental theorem of calculus applied to gradients and losses, respectively:
\begin{align}
\nabla L(w_{k+1})-\nabla L(w_k) &= \bar H_k\, d_k, \label{eq:grad_diff_mvt}\\
L(w_{k+1})-L(w_k) &= \nabla L(w_k)^\top d_k + \tfrac{1}{2}\, d_k^\top\widetilde H_k\, d_k. \label{eq:loss_diff_mvt}
\end{align}

\paragraph{The roles of {\boldmath$2/\eta$}.}
Restating \cref{thm:propagator_loss} in terms of these averages reveals $2/\eta$ as a single threshold governing both propagation and descent.

\begin{corollary}[Roles of {\boldmath$2/\eta$}]\label{prop:triple_consequences}
Fix $k$ with $d_k \neq 0$. Step reversal $d_{k+1}=-d_k$ and two-step return $w_{k+2}=w_k$ are both equivalent to
\begin{align}\label{eq:dk_propagator}
\bar H_k d_k=\frac{2}{\eta}\,d_k.
\end{align}
Recall the effective curvature $\widetilde r_k$ from \cref{thm:propagator_loss}. The loss decreases if and only if $\widetilde r_k \le 2/\eta$:\label{eq:step_curvature}
\begin{align}\label{eq:loss_change_exact}
L(w_{k+1})-L(w_k)
= -\frac{\|d_k\|_2^2}{2\eta}\Bigl(2-\eta \widetilde r_k\Bigr).
\end{align}
\end{corollary}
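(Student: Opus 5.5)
This corollary is essentially a restatement of \cref{thm:propagator_loss}, so the plan is to read off each claim from the propagator identity and the one-step loss formula.

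\textbf{Step reversal and two-step return.} By \cref{thm:propagator_loss}(i) we have $d_{k+1}=(I-\eta\bar H_k)d_k$. Hence $d_{k+1}=-d_k$ holds exactly when $(2I-\eta\bar H_k)d_k=0$, which is the same as $\bar H_k d_k=(2/\eta)d_k$. For the return condition, use the second propagator identity $w_{k+2}-w_k=(2I-\eta\bar H_k)d_k$. It shows that $w_{k+2}=w_k$ reduces to the same equation. Alternatively, \cref{thm:euler_action}(ii) gives $w_{k+2}-w_k=d_k+d_{k+1}$, so the vanishing of the two-step displacement is literally the condition $d_{k+1}=-d_k$. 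This makes the equivalence of the three statements immediate. Dividing by $\eta>0$ is the only manipulation needed. The hypothesis $d_k\neq0$ is not required for the algebra, but it makes the eigen-equation \eqref{eq:dk_propagator} meaningful: it says $d_k$ is a genuine eigenvector of $\bar H_k$ with eigenvalue $2/\eta$.

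\textbf{Loss change.} Because $d_k\neq 0$, the quantity $\widetilde r_k$ is well defined. \Cref{thm:propagator_loss}(ii) then gives \eqref{eq:loss_change_exact} verbatim. To see it directly, substitute $\nabla L(w_k)=-d_k/\eta$ (\cref{thm:euler_action}(i)) into \eqref{eq:loss_diff_mvt}. This gives $-\|d_k\|^2/\eta+\tfrac12\widetilde r_k\|d_k\|^2$, which factors as stated. Since $\|d_k\|^2/(2\eta)>0$, the sign of $L(w_{k+1})-L(w_k)$ is the sign of $\eta\widetilde r_k-2$. Therefore the loss is non-increasing iff $\widetilde r_k\le 2/\eta$, and strictly decreasing iff $\widetilde r_k<2/\eta$.

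\textbf{Main obstacle.} There is no real analytic obstacle; the only care point is the phrasing ``decreases iff $\widetilde r_k\le 2/\eta$''. At equality the loss is unchanged rather than strictly decreased. I would therefore state the result as weak decrease, or note that strict decrease corresponds to the strict inequality. I would also remark that the two thresholds involve different averages, $\bar H_k$ versus $\widetilde H_k$. The corollary does not claim they coincide; that reconciliation is deferred to the mean value localization in \cref{sec:eos}.
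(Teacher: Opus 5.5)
Your proposal is correct and follows the paper's proof essentially verbatim. Both read the propagator and the loss formula off \cref{thm:propagator_loss}, use $w_{k+2}-w_k=d_k+d_{k+1}=(2I-\eta\bar H_k)d_k$ for the two-step return, and use positivity of $\|d_k\|_2^2/(2\eta)$ for the sign. Your remark that $\widetilde r_k=2/\eta$ gives zero loss change, so the ``iff'' is really about weak decrease with strict decrease iff $\widetilde r_k<2/\eta$, is a fair sharpening of the statement as written.
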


\paragraph{Curvature along the step.}
Two different Hessian averages appear, but they are two views of the same underlying object. To make this precise, we restrict attention to the step direction $u_k \triangleq d_k/\|d_k\|_2$ and define
\begin{align}
q_k(\tau)\triangleq u_k^\top \nabla^2L(w_k+\tau d_k)\,u_k, \qquad \tau \in [0,1].
\end{align}
With $\bar r_k \triangleq d_k^\top \bar H_k d_k/\|d_k\|_2^2$, both $\bar r_k$ and $\widetilde r_k$ are weighted integrals of $q_k$. Since $q_k$ is continuous, each is attained as an exact pointwise value at some interior point of the step segment (\cref{thm:true_sharpness}), and the loss change is well approximated by a proxy that uses only trajectory data:
\begin{align}
L(w_{k+1}) - L(w_k) \;\approx\; -\frac{1}{2\eta}\,d_k^\top(w_{k+2}-w_k).
\end{align}

%% ====================================================================
\section{The Origin of Edge of Stability}\label{sec:eos}
%% ====================================================================

\paragraph{Why the dynamics is forced toward {\boldmath$2/\eta$}.}\label{subsec:forced_to_edge}
The previous section established $2/\eta$ as the threshold governing both step propagation and loss descent. We now show that this threshold is a global attractor: the trajectory is forced to visit it. The argument is a conservation law. The loss-change formula (\cref{prop:triple_consequences}) writes each per-step loss change as the product of $\|d_k\|_2^2$ and the deviation $2/\eta - \widetilde r_k$. Summing over the trajectory, the loss side telescopes to a bounded quantity while the curvature deviations accumulate. Because the total loss drop is finite, the curvature $\widetilde r_k$ cannot stay far from $2/\eta$.

\begin{theorem}[Curvature concentration at {\boldmath$2/\eta$}]\label{thm:edge_balance}
Let $L:\mathbb{R}^d\to\mathbb{R}$ be $C^2$ and bounded below by
$L_{\inf}\triangleq \inf_{w} L(w) > -\infty$.
Run gradient descent $w_{k+1}=w_k-\eta\nabla L(w_k)$ with step size $\eta>0$.
Use the notation $d_k$, $\widetilde H_k$, $\widetilde r_k$ from
\cref{thm:propagator_loss,eq:step_curvature}, and set
$E_K\triangleq \sum_{k=0}^{K-1}\|d_k\|_2^2$.

For every $K\ge 1$, the $\|d_k\|^2$-weighted deviations from $2/\eta$ telescope to the total loss change:
\begin{align}\label{eq:edge_balance_identity}
\sum_{k=0}^{K-1} \|d_k\|_2^2\!\left(\frac{2}{\eta}-\widetilde r_k\right)
= 2\bigl(L(w_0)-L(w_K)\bigr).
\end{align}
Dividing by $E_K > 0$ rewrites this as a weighted average:
\begin{align}\label{eq:edge_balance_weighted}
\frac{\sum_{k=0}^{K-1}\|d_k\|_2^2\,\widetilde r_k}{E_K}
= \frac{2}{\eta} - \frac{2\bigl(L(w_0)-L(w_K)\bigr)}{E_K}.
\end{align}
Since the left-hand side is bounded above by $\max_{k < K} \widetilde r_k$ and below by $\min_{k < K} \widetilde r_k$, we obtain a lower bound on the maximum curvature encountered along the trajectory:
\begin{align}\label{eq:near_edge_forcing}
\max_{0\le k\le K-1}\widetilde r_k
\;\ge\;
\frac{2}{\eta} - \frac{2\bigl(L(w_0)-L_{\inf}\bigr)}{E_K}.
\end{align}
As $E_K$ grows, the right-hand side approaches $2/\eta$, so the trajectory is forced to visit steps with effective curvature arbitrarily close to the threshold. If $L$ is additionally bounded above along the trajectory and $E_K\to\infty$, then the weighted average converges:
\begin{align}\label{eq:edge_concentration}
\frac{\sum_{k=0}^{K-1}\|d_k\|_2^2\,\widetilde r_k}{E_K}\longrightarrow \frac{2}{\eta}.
\end{align}
\end{theorem}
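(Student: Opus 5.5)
The plan is to derive everything from the one-step loss-change identity of \cref{thm:propagator_loss}(ii), restated in \cref{prop:triple_consequences}. First I will recall why that identity is exact. Taylor's formula with integral remainder along the segment $w_k+\tau d_k$ gives
\[
L(w_{k+1})-L(w_k)=\nabla L(w_k)^\top d_k+\tfrac12 d_k^\top\widetilde H_k d_k .
\]
Substituting $\nabla L(w_k)=-d_k/\eta$ (the partial criticality condition of \cref{thm:euler_action}(i)) turns this into
\[
L(w_{k+1})-L(w_k)=-\tfrac{1}{2}\|d_k\|_2^2\bigl(\tfrac{2}{\eta}-\widetilde r_k\bigr).
\]
This holds even when $d_k=0$, because both sides then vanish, so no convention for $\widetilde r_k$ is needed there. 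Summing over $k=0,\dots,K-1$ telescopes the left side to $L(w_K)-L(w_0)$. Multiplying by $-2$ gives \eqref{eq:edge_balance_identity}.

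Next, assuming $E_K>0$, I divide by $E_K$ and rearrange to get \eqref{eq:edge_balance_weighted}. The left side of \eqref{eq:edge_balance_weighted} is a convex combination of the $\widetilde r_k$ over the indices with $d_k\ne0$, with weights $\|d_k\|^2/E_K$. It is therefore bounded above by the maximum of those $\widetilde r_k$, and the maximum in \eqref{eq:near_edge_forcing} should be read over such $k$. Since $L(w_K)\ge L_{\inf}$, we have $L(w_0)-L(w_K)\le L(w_0)-L_{\inf}$. This gives \eqref{eq:near_edge_forcing}, and letting $E_K\to\infty$ sends the right-hand side to $2/\eta$.

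For \eqref{eq:edge_concentration}, the loss is bounded above along the trajectory by some $M$ and below by $L_{\inf}$. Hence $|L(w_0)-L(w_K)|\le M-L_{\inf}$ uniformly in $K$, and the correction term $2(L(w_0)-L(w_K))/E_K$ tends to $0$ as $E_K\to\infty$. In fact $L(w_0)$ is itself one of the trajectory values, so the lower bound alone already controls the sign that matters. The upper bound handles the other sign for completeness.

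There is no real obstacle here, since the result is a bookkeeping consequence of the exact identity. The only points needing care are the degenerate steps with $d_k=0$, where $\widetilde r_k$ is undefined, and stating the max/min over indices with nonzero step so that the convex-combination argument is valid.
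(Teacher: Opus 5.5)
Your proposal is correct and follows the paper's proof essentially step for step. You telescope the exact one-step identity $L(w_{k+1})-L(w_k)=-\tfrac12\|d_k\|_2^2\bigl(2/\eta-\widetilde r_k\bigr)$, divide by $E_K$, bound the weighted average by the maximum, and then use $L(w_K)\ge L_{\inf}$, together with the upper bound on $L$ along the trajectory for the limit. Your explicit handling of steps with $d_k=0$ is a small refinement the paper leaves implicit. The upper bound on $L$ is genuinely needed for the two-sided limit rather than ``for completeness'', since $L_{\inf}$ alone yields only the $\liminf\ge 2/\eta$ direction.
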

\noindent\emph{Proof sketch.} Telescope the one-step loss-change formula over $K$ steps. See \cref{app:proofs}.

So far we know that $\widetilde r_k$ concentrates near $2/\eta$ in a weighted average sense. The next theorem strengthens this to step-level concentration by decomposing deviations into positive and negative parts.

\begin{theorem}[Concentration near {\boldmath$2/\eta$}]\label{thm:edge_window}
In the setting of \cref{thm:edge_balance}, define
\begin{align}
B_K^- \;\triangleq\; \sum_{k=0}^{K-1}\|d_k\|_2^2\Bigl(\tfrac{2}{\eta}-\widetilde r_k\Bigr)_+,
\qquad
B_K^+ \;\triangleq\; \sum_{k=0}^{K-1}\|d_k\|_2^2\Bigl(\widetilde r_k-\tfrac{2}{\eta}\Bigr)_+.
\end{align}
Then for every $K\ge 1$,
\begin{align}\label{eq:signed_balance}
B_K^- - B_K^+ = 2\bigl(L(w_0)-L(w_K)\bigr).
\end{align}
For any $\delta>0$,
\begin{align}
\sum_{\widetilde r_k\le 2/\eta-\delta}\|d_k\|_2^2 &\;\le\; \frac{2(L(w_0)-L_{\inf})+B_K^+}{\delta}, \label{eq:window_bound_sub}\\
\sum_{\widetilde r_k\ge 2/\eta+\delta}\|d_k\|_2^2 &\;\le\; \frac{B_K^+}{\delta}. \label{eq:window_bound_super}
\end{align}
Hence if $B_\infty^+ \triangleq \lim_{K\to\infty}B_K^+<\infty$, then for every $\delta>0$,
\begin{align}\label{eq:finite_outside_window}
\sum_{k=0}^{\infty}\mathbf{1}_{\{|\widetilde r_k-2/\eta|\ge\delta\}}\|d_k\|_2^2 < \infty,
\end{align}
and if moreover $E_K\to\infty$, the weighted curvature concentrates at $2/\eta$:
\begin{align}\label{eq:window_concentration}
\frac{\sum_{k=0}^{K-1}\mathbf{1}_{\{|\widetilde r_k-2/\eta|<\delta\}}\|d_k\|_2^2}{E_K}
\longrightarrow 1.
\end{align}
No monotone-descent assumption is needed; the controlling quantity is finiteness of $B_\infty^+$.
\end{theorem}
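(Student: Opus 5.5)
The plan is to read everything off the telescoped identity \eqref{eq:edge_balance_identity} of \cref{thm:edge_balance}, itself obtained by summing the one-step loss-change formula of \cref{thm:propagator_loss}(ii). For each $k$ write the deviation $\tfrac{2}{\eta}-\widetilde r_k=(\tfrac{2}{\eta}-\widetilde r_k)_+-(\widetilde r_k-\tfrac{2}{\eta})_+$. Multiplying by $\|d_k\|_2^2$ and summing over $k<K$ splits the left-hand side of \eqref{eq:edge_balance_identity} into $B_K^- - B_K^+$, which is \eqref{eq:signed_balance}. Steps with $d_k=0$ contribute zero to every sum, so the undefined value of $\widetilde r_k$ there is irrelevant.

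For the window bounds we use Markov-type arguments. On the index set $\{\widetilde r_k\le 2/\eta-\delta\}$ the positive part $(\tfrac{2}{\eta}-\widetilde r_k)_+$ is at least $\delta$, so
\[
\delta\sum_{\widetilde r_k\le 2/\eta-\delta}\|d_k\|_2^2\le B_K^- .
\]
By \eqref{eq:signed_balance} and $L(w_K)\ge L_{\inf}$,
\[
B_K^-=B_K^++2\bigl(L(w_0)-L(w_K)\bigr)\le B_K^++2\bigl(L(w_0)-L_{\inf}\bigr),
\]
which gives \eqref{eq:window_bound_sub}. The same argument on $\{\widetilde r_k\ge 2/\eta+\delta\}$, using $(\widetilde r_k-2/\eta)_+\ge\delta$, gives \eqref{eq:window_bound_super}.

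Next, both $B_K^\pm$ are nondecreasing in $K$, so $B_\infty^+$ is well defined in $[0,\infty]$. Assume $B_\infty^+<\infty$. The right-hand sides of \eqref{eq:window_bound_sub} and \eqref{eq:window_bound_super} are then bounded uniformly in $K$ by $(2(L(w_0)-L_{\inf})+B_\infty^+)/\delta$ and $B_\infty^+/\delta$. Adding them bounds the partial sums of the nonnegative series $\sum_k \mathbf 1_{\{|\widetilde r_k-2/\eta|\ge\delta\}}\|d_k\|_2^2$, and monotone convergence yields \eqref{eq:finite_outside_window}. For \eqref{eq:window_concentration}, write the ratio as $1$ minus the outside-window mass divided by $E_K$. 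The numerator is bounded by a constant $C_\delta$ independent of $K$, and $E_K\to\infty$, so the ratio tends to $1$.

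The only points needing care are bookkeeping. The first is that the identity from \cref{thm:edge_balance} holds without any monotonicity of $L(w_k)$, which is why no descent assumption enters. The second is that the sub-threshold bound needs $L_{\inf}$ rather than $L(w_K)$ to be uniform in $K$. I expect no genuine obstacle beyond this.
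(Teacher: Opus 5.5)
Your proposal is correct and matches the paper's proof step for step. Splitting $2/\eta-\widetilde r_k$ into positive and negative parts inside the telescoped identity gives the signed balance, and the Markov-type estimates with $L(w_K)\ge L_{\inf}$ give both window bounds. Uniform boundedness under $B_\infty^+<\infty$ and division by $E_K\to\infty$ then give the last two claims. Your extra bookkeeping (steps with $d_k=0$, monotonicity of $B_K^\pm$, and writing the ratio as one minus the outside-window mass over $E_K$) only spells out what the paper leaves implicit.
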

\noindent\emph{Proof sketch.} Decompose $2/\eta - \widetilde{r}_k$ into positive and negative parts; Markov-type estimates yield the window bounds. See \cref{app:proofs}. Panel~(b) of \autoref{fig:edge_balance} validates the loss-change formula step by step: the scatter of actual $\Delta L_k$ against the prediction from $\bar{r}_k$ lies tightly along $y=x$, confirming both \cref{thm:propagator_loss}(ii) and that $\bar r_k$ and $\widetilde r_k$ nearly coincide (\cref{thm:true_sharpness}).

\begin{figure}[t]
\centering
\includegraphics[width=0.48\linewidth]{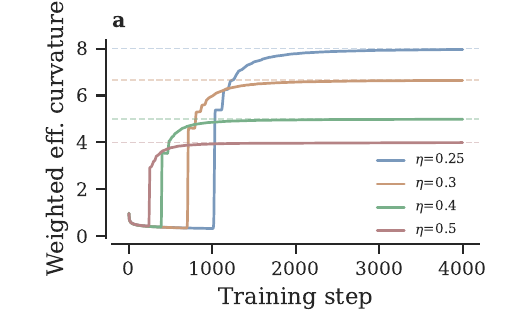}%
\hfill
\includegraphics[width=0.48\linewidth]{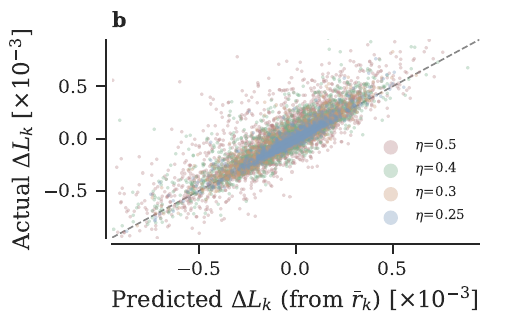}
\caption{\textbf{Validation of \cref{thm:edge_balance,thm:propagator_loss}.}
Four learning rates, $4{,}000$ steps, shared initialization.
\textbf{a},~Weighted average curvature converges to $2/\eta$ (dashed).
\textbf{b},~Actual $\Delta L_k$ vs.\ the proxy $-\frac{1}{2\eta}d_k^\top(w_{k+2}-w_k)$.
Its tightness confirms that $\bar r_k$ and $\widetilde r_k$ nearly coincide on this run.}
\label{fig:edge_balance}
\end{figure}

\paragraph{Exact sharpness forcing on each edge.}
So far the concentration theorems control $\widetilde r_k$ and $\bar r_k$, which are weighted averages of the Hessian along each step segment. A natural question is whether this forcing extends to the true Hessian eigenvalue at an actual point. The answer is affirmative and requires nothing beyond the mean value theorem: along each edge, $\widetilde r_k$ and $\bar r_k$ are realized as exact pointwise values of the directional curvature at specific interior points, so the forcing transfers to the true Hessian eigenvalue at those points with no residual gap.

\begin{theorem}[Localization to the true Hessian]\label{thm:true_sharpness}
Let $u_k = d_k/\|d_k\|_2$ when $d_k \neq 0$. For each such $k$, there exist $\xi_k, \zeta_k \in (0,1)$ such that
\begin{align}\label{eq:edge_localization}
\widetilde r_k = u_k^\top \nabla^2 L(w_k + \xi_k d_k)\, u_k, \qquad \bar r_k = u_k^\top \nabla^2 L(w_k + \zeta_k d_k)\, u_k.
\end{align}
Consequently, $\lambda_{\max}(\nabla^2 L(w_k + \xi_k d_k)) \ge \widetilde r_k$ and $\lambda_{\max}(\nabla^2 L(w_k + \zeta_k d_k)) \ge \bar r_k$. Combined with \cref{thm:edge_balance}:
\begin{align}\label{eq:edge_localized_forcing}
\sum_{k=0}^{K-1} \|d_k\|_2^2 \!\left(\frac{2}{\eta} - \lambda_{\max}\!\bigl(\nabla^2 L(w_k + \xi_k d_k)\bigr)\right) \le 2\bigl(L(w_0) - L(w_K)\bigr),
\end{align}
\begin{align}\label{eq:edge_localized_max}
\max_{0 \le k < K} \lambda_{\max}\!\bigl(\nabla^2 L(w_k + \xi_k d_k)\bigr) \ge \frac{2}{\eta} - \frac{2(L(w_0) - L_{\inf})}{E_K}.
\end{align}
From the propagator (\cref{cor:near_periodic}):
\begin{align}\label{eq:edge_localized_propagator}
\lambda_{\max}\!\bigl(\nabla^2 L(w_k + \zeta_k d_k)\bigr) \ge \frac{2}{\eta} - \frac{\|w_{k+2} - w_k\|_2}{\eta\,\|d_k\|_2}.
\end{align}
\end{theorem}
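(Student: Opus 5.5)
The plan is to reduce everything to the scalar function $q_k(\tau)=u_k^\top\nabla^2L(w_k+\tau d_k)u_k$, which is continuous on $[0,1]$ because $L$ is $C^2$. By definition,
\begin{align}
\widetilde r_k=\int_0^1 2(1-\tau)\,q_k(\tau)\,d\tau,\qquad \bar r_k=\int_0^1 q_k(\tau)\,d\tau .
\end{align}
Both weights $2(1-\tau)$ and $1$ are nonnegative and integrate to one on $[0,1]$. The weighted (first) mean value theorem for integrals then gives a point in $[0,1]$ where $q_k$ equals each average.

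To get an interior point, I will argue as follows. If $q_k$ is constant on $[0,1]$, any $\xi_k\in(0,1)$ works. Otherwise $m=\min q_k<\widetilde r_k<\max q_k=M$. The inequality is strict because the weight is positive on $[0,1)$: if equality held, say $\widetilde r_k=m$, then $\int 2(1-\tau)(q_k-m)\,d\tau=0$ with a continuous nonnegative integrand would force $q_k\equiv m$ on $[0,1)$, hence on $[0,1]$ by continuity. Applying the intermediate value theorem to $q_k$ between a minimizer and a maximizer then yields a point strictly between them where $q_k=\widetilde r_k$. The subtle case is when the minimizer or maximizer sits at an endpoint. The point produced lies strictly between two points of $[0,1]$, so it lies in $(0,1)$. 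The same argument with weight $1$ gives $\zeta_k$. This endpoint bookkeeping is the only mildly delicate step.

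The Rayleigh quotient bound $u^\top Au\le\lambda_{\max}(A)$ for unit $u$ immediately gives $\lambda_{\max}(\nabla^2L(w_k+\xi_kd_k))\ge\widetilde r_k$, and similarly for $\zeta_k$. For \eqref{eq:edge_localized_forcing}, I substitute $\lambda_{\max}\ge\widetilde r_k$ termwise into the identity \eqref{eq:edge_balance_identity} of \cref{thm:edge_balance}. Since each weight $\|d_k\|^2\ge0$, the left side can only decrease, which gives the inequality. Terms with $d_k=0$ vanish, so the choice of $\xi_k$ there is irrelevant. Combining \eqref{eq:near_edge_forcing} with $\lambda_{\max}\ge\widetilde r_k$ gives \eqref{eq:edge_localized_max}.

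For \eqref{eq:edge_localized_propagator}, I use \cref{thm:propagator_loss}(i): $w_{k+2}-w_k=(2I-\eta\bar H_k)d_k$. Taking the inner product with $u_k$ and dividing by $\|d_k\|_2$ gives
\begin{align}
2-\eta\bar r_k=\frac{u_k^\top(w_{k+2}-w_k)}{\|d_k\|_2}\le\frac{\|w_{k+2}-w_k\|_2}{\|d_k\|_2}
\end{align}
by Cauchy--Schwarz. Hence $\bar r_k\ge 2/\eta-\|w_{k+2}-w_k\|_2/(\eta\|d_k\|_2)$. Chaining this with $\lambda_{\max}(\nabla^2L(w_k+\zeta_kd_k))\ge\bar r_k$ completes the proof. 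The cited corollary presumably packages exactly this bound, and I would verify it directly from the propagator as above.
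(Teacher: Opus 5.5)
Your proof is correct, but it produces the interior points by a different route from the paper's. The remaining steps coincide with the paper: the Rayleigh-quotient bound, the termwise substitution into \eqref{eq:edge_balance_identity}, and the Cauchy--Schwarz bound from $w_{k+2}-w_k=(2I-\eta\bar H_k)d_k$ (the same identity \cref{cor:near_periodic} obtains by the gradient-sum rewriting). Your remark that the $d_k=0$ terms vanish also covers a point the paper leaves implicit.

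\textbf{The paper's route.} It works with the scalar restriction $g_k(t)=L(w_k+t\,d_k)$.
\begin{itemize}
\item Taylor's theorem with Lagrange remainder gives $g_k(1)=g_k(0)+g_k'(0)+\tfrac12 g_k''(\xi_k)$. Comparing this with the integral-remainder form from \cref{thm:propagator_loss}(ii) identifies $\widetilde r_k=q_k(\xi_k)$.
\item The ordinary mean value theorem applied to $g_k'$ gives $\|d_k\|_2^2\,\bar r_k=g_k'(1)-g_k'(0)=g_k''(\zeta_k)$, so $\bar r_k=q_k(\zeta_k)$.
\end{itemize}
Both Lagrange-type theorems supply points in the open interval $(0,1)$, so interiority comes for free with no case analysis.

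\textbf{Your route.} You read $\widetilde r_k$ and $\bar r_k$ directly as unit-mass weighted averages of $q_k$ and invoke the first mean value theorem for integrals. In its textbook form that theorem only yields a point in the closed interval $[0,1]$. Your strictness argument closes this gap correctly: a nonnegative continuous integrand with zero integral vanishes identically, and the weight is positive on $[0,1)$, which gives $\min q_k<\widetilde r_k<\max q_k$ in the nonconstant case. The intermediate value theorem strictly between a minimizer and a maximizer then lands in $(0,1)$.

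\textbf{Comparison.} Your argument is a bit longer, but it is more general. It works for any continuous, nonnegative, unit-mass averaging kernel that vanishes on no open subinterval. It also does not require recognizing $2(1-\tau)$ as the Taylor remainder kernel, or the uniform weight as $\int_0^1 g_k''$.
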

\noindent\emph{Proof sketch.} Define the scalar restriction $g_k(t) = L(w_k + t\,d_k)$. Taylor's theorem with Lagrange remainder gives $g_k(1) = g_k(0) + g_k'(0) + \frac{1}{2}g_k''(\xi_k)$; comparing with the integral form (\cref{thm:propagator_loss}(ii)) identifies $\widetilde r_k = q_k(\xi_k)$. The mean value theorem applied to $g_k'$ gives $\bar r_k = q_k(\zeta_k)$. The forcing and propagator bounds follow from $q_k \le \lambda_{\max}$ combined with \cref{thm:edge_balance,cor:near_periodic}. See \cref{app:true_sharpness}.

\paragraph{Discrete stability and near-periodicity.}
The recurrence $d_{k+1} = (I - \eta \bar H_k) d_k$ is stable when the eigenvalues of $\bar H_k$ lie in $[0, 2/\eta]$: outside this interval the multiplier $|1 - \eta\lambda|$ exceeds $1$ and the step norm grows. At the Edge of Stability the sharpest eigenvalue sits near $2/\eta$, giving a multiplier near $-1$; the step direction reverses at each iteration and the loss oscillates (formal bound in \cref{thm:discrete_stability}).

In practice, gradient descent does not settle into a period-two orbit but exhibits approximate periodicity~\citep{cohen2021gradient}: the iterate nearly returns after two steps, so $\|w_{k+2} - w_k\| \ll \|w_{k+1} - w_k\|$ (\autoref{fig:near_periodicity} in \cref{app:proofs}). This condition alone, without any Hessian computation, ensures that the directional curvature of $\bar H_k$ along the step is close to $2/\eta$.

\begin{corollary}[Near-Periodicity Implies Near-Critical Sharpness]\label{cor:near_periodic}
The directional curvature of $\bar{H}_k$ along the step direction satisfies
\begin{align}\label{eq:near_periodic_bound}
\left|\frac{d_k^\top \bar{H}_k\, d_k}{\|d_k\|_2^2} - \frac{2}{\eta}\right| \le \frac{\|w_{k+2} - w_k\|_2}{\eta\,\|d_k\|_2}.
\end{align}
\end{corollary}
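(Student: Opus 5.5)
The plan is to read the bound directly off the two-step identity in \cref{thm:propagator_loss}(i), namely $w_{k+2}-w_k=(2I-\eta\bar H_k)\,d_k$. I will pair it with $d_k$ and control the result by Cauchy--Schwarz. The argument assumes $d_k\neq 0$, as is implicit in the statement, since otherwise the left-hand side is undefined.

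First, take the inner product of both sides of the identity with $d_k$. This gives
\begin{align}
d_k^\top(w_{k+2}-w_k) = 2\|d_k\|_2^2 - \eta\, d_k^\top\bar H_k d_k .
\end{align}
Dividing by $\eta\|d_k\|_2^2$ rewrites this as
\begin{align}
\frac{2}{\eta}-\frac{d_k^\top\bar H_k d_k}{\|d_k\|_2^2} = \frac{d_k^\top(w_{k+2}-w_k)}{\eta\,\|d_k\|_2^2}.
\end{align}
Second, take absolute values and apply Cauchy--Schwarz, $|d_k^\top(w_{k+2}-w_k)|\le\|d_k\|_2\,\|w_{k+2}-w_k\|_2$. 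Cancelling one factor of $\|d_k\|_2$ then yields exactly \eqref{eq:near_periodic_bound}.

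Nothing in this argument is a genuine obstacle; it is a two-line consequence of the propagator. The only point that needs care is the sign convention: the quantity bounded is the directional curvature of the uniform average $\bar H_k$, not of $\widetilde H_k$. The identity used comes from differencing the gradients in \eqref{eq:grad_diff_mvt}, so no symmetrization or triangular weighting enters.

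It is also worth noting the following, since \cref{thm:true_sharpness} invokes this corollary. Combining the bound with $\bar r_k=q_k(\zeta_k)\le\lambda_{\max}(\nabla^2L(w_k+\zeta_k d_k))$ immediately gives the one-sided localized bound \eqref{eq:edge_localized_propagator}.
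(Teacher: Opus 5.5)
Your proposal is correct and is essentially the paper's proof. The paper writes $\bar H_k d_k = -\frac{1}{\eta}(w_{k+2}-w_k)+\frac{2}{\eta}d_k$, which is a rearrangement of your two-step identity $w_{k+2}-w_k=(2I-\eta\bar H_k)d_k$. It then pairs with $d_k$, divides by $\|d_k\|_2^2$, and applies Cauchy--Schwarz, exactly as you do.
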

\noindent\emph{Proof sketch.} Express the two-step displacement in terms of gradients; Cauchy--Schwarz completes the bound. See \cref{app:proofs}.

%% ====================================================================
\section{Stability Mechanisms at the Edge}\label{sec:edge_mechanisms}
%% ====================================================================

\cref{sec:eos} explains where $2/\eta$ comes from. Two questions remain: why does the system not diverge at the edge, and why does it stay there? The answer splits into two regimes (proofs in \cref{app:recoil_oscillatory}).

\paragraph{Growth above the threshold.}
Write $\bar r_k \triangleq d_k^\top \bar H_k d_k / \|d_k\|_2^2$.

\begin{proposition}[Growth above the threshold]\label{prop:supercritical_recoil}
$\langle d_{k+1}, d_k\rangle = (1-\eta\bar r_k)\|d_k\|_2^2$. Hence if $\bar r_k \ge 2/\eta + \delta$, then $\|d_{k+1}\|_2 \ge (1+\eta\delta)\|d_k\|_2$, and a sustained above-threshold run of length $t-s$ yields geometric growth: $\|d_t\|_2 \ge (1+\eta\delta)^{t-s}\|d_s\|_2$.
\end{proposition}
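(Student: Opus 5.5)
The plan is to derive all three claims from the propagator identity of \cref{thm:propagator_loss}(i), $d_{k+1}=(I-\eta\bar H_k)d_k$. The identity is exact, so no expansion is needed.

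First, take the inner product of both sides with $d_k$. This gives
\begin{align}
\langle d_{k+1},d_k\rangle = \|d_k\|_2^2-\eta\, d_k^\top\bar H_k d_k = (1-\eta\bar r_k)\|d_k\|_2^2,
\end{align}
where the last equality uses the definition $\bar r_k = d_k^\top\bar H_k d_k/\|d_k\|_2^2$ for $d_k\neq 0$. If $d_k=0$, both sides vanish and there is nothing to prove.

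Second, assume $\bar r_k\ge 2/\eta+\delta$. Then $1-\eta\bar r_k\le -1-\eta\delta<0$, so $|1-\eta\bar r_k|\ge 1+\eta\delta$. Applying Cauchy--Schwarz to the identity gives
\begin{align}
\|d_{k+1}\|_2\|d_k\|_2 \ge |\langle d_{k+1},d_k\rangle| = |1-\eta\bar r_k|\,\|d_k\|_2^2 \ge (1+\eta\delta)\|d_k\|_2^2 .
\end{align}
Dividing by $\|d_k\|_2$ (when $d_k\neq0$) yields $\|d_{k+1}\|_2\ge(1+\eta\delta)\|d_k\|_2$. The case $d_k=0$ is trivial.

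Third, for the run statement, interpret a sustained above-threshold run as $\bar r_j\ge 2/\eta+\delta$ for every $j=s,\dots,t-1$. Induct on $j$: at each step the one-step bound multiplies the norm by at least $1+\eta\delta$. Chaining the $t-s$ inequalities gives $\|d_t\|_2\ge(1+\eta\delta)^{t-s}\|d_s\|_2$.

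I expect no step to be a real obstacle. The only point needing care is the sign and direction of the inequalities. The lower bound on $|1-\eta\bar r_k|$ requires $1-\eta\bar r_k$ to be negative, which holds because $\bar r_k>1/\eta$. Only the Rayleigh quotient along $d_k$ is controlled, not the full spectrum of $\bar H_k$, which is why the argument goes through the inner product $\langle d_{k+1},d_k\rangle$ and Cauchy--Schwarz rather than through an operator-norm bound.
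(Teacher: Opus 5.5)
Your proposal is correct and follows the paper's proof essentially verbatim. Like the paper, you take the inner product of the propagator identity $d_{k+1}=(I-\eta\bar H_k)d_k$ with $d_k$, then use $1-\eta\bar r_k\le -(1+\eta\delta)$ together with Cauchy--Schwarz to get the one-step bound, and finally chain that bound over $j=s,\dots,t-1$.
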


On any bounded trajectory, step-norm growth is impossible, so the dynamics cannot remain above the threshold.

\paragraph{Oscillatory cancellation.}
The recoil explains why the system cannot stay above $2/\eta$. Inside the stability window, the question is whether the near-critical multiplier $m_k \approx -1$ causes secular drift.

\begin{theorem}[Oscillatory cancellation]\label{thm:robust_oscillatory}
Consider $x_{k+1} = m_k x_k - \eta u_k$, $x_0 = 0$, with $m_k \in [-1, 0]$ for all $k$. Then for every $T \ge 1$,
\begin{align}\label{eq:robust_oscillatory_bound}
|x_T| \le \eta\left(|u_{T-1}| + \sum_{k=0}^{T-2}|u_{k+1}-u_k|\right).
\end{align}
\end{theorem}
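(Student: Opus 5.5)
The plan is to prove a slightly stronger, interval-valued invariant by induction on $T$. The bound then follows by taking absolute values. Write $V_T \triangleq \sum_{k=0}^{T-2}|u_{k+1}-u_k|$, with $V_1=0$, and $\operatorname{hull}\{p,q\}$ for the closed segment between two reals $p,q$. I treat the scalar case, as the statement suggests.

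\emph{Invariant to prove:}
\begin{align}
\frac{x_T}{\eta}\in \operatorname{hull}\{0,\,-u_{T-1}\}+[-V_T,\,V_T]\qquad (T\ge 1).
\end{align}
This gives the theorem, because every point of $\operatorname{hull}\{0,-u_{T-1}\}$ has modulus at most $|u_{T-1}|$, so $|x_T|\le \eta(|u_{T-1}|+V_T)$.

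\emph{Base case.} From $x_0=0$ we get $x_1=-\eta u_0$, so $x_1/\eta=-u_0$. This lies in $\operatorname{hull}\{0,-u_0\}$, and $V_1=0$.

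\emph{Inductive step.} Here $m_T\in[-1,0]$ is used, and I expect this to be the only real point to get right. Assume the invariant at $T$. Multiplication by $m_T$ is a contraction by a factor $|m_T|\le 1$ that reverses orientation. It therefore maps $\operatorname{hull}\{0,-u_{T-1}\}$ onto $\operatorname{hull}\{0,-m_Tu_{T-1}\}$. Since $-m_T\in[0,1]$, this segment is contained in $\operatorname{hull}\{0,u_{T-1}\}$. The same map sends $[-V_T,V_T]$ into itself. Hence
\begin{align}
\frac{m_T x_T}{\eta}\in\operatorname{hull}\{0,\,u_{T-1}\}+[-V_T,V_T].
\end{align}
Subtracting $u_T$ gives
\begin{align}
\frac{x_{T+1}}{\eta}\in\operatorname{hull}\{-u_T,\,u_{T-1}-u_T\}+[-V_T,V_T].
\end{align}
The endpoint $u_{T-1}-u_T$ is within $|u_T-u_{T-1}|$ of $0$. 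So $\operatorname{hull}\{-u_T,u_{T-1}-u_T\}$ lies in $\operatorname{hull}\{-u_T,0\}+[-|u_T-u_{T-1}|,\,|u_T-u_{T-1}|]$. Since $V_T+|u_T-u_{T-1}|=V_{T+1}$, this closes the induction.

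The mechanism is that the sign flip of $m_T$ exactly cancels the sign flip of the forcing. The point $-u_{T-1}$ is sent by $m_T$ toward $+u_{T-1}$, where it nearly cancels the next forcing $-u_T$. Only the increment $u_{T-1}-u_T$ survives, and it is charged to the variation budget. This is a discrete Abel-summation argument. For $m_k\equiv -1$ it reduces to bounding an alternating sum by its first term plus its total variation.

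A naive induction on $|x_T|$ alone fails because it loses the sign information. The essential choice is therefore to track the one-sided segment $\operatorname{hull}\{0,-u_{T-1}\}$ rather than the symmetric interval $[-|u_{T-1}|,|u_{T-1}|]$.

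If vector-valued $x_k,u_k$ with a scalar $m_k$ were intended, the same proof works verbatim with norms. The segment is replaced by the set $\{-s\,u_{T-1}: s\in[0,1]\}$ enlarged by a ball of radius $V_T$. Both are convex and stable under multiplication by scalars in $[-1,0]$ up to the reflection used above.
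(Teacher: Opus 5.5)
Your proof is correct, and it takes a different route from the paper's.

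The paper unrolls the recursion to $x_T=-\eta\sum_{s=0}^{T-1}a_su_s$ with $a_s=\prod_{r=s+1}^{T-1}m_r=(-1)^{T-1-s}b_s$. It notes that $b_s=\prod_{r=s+1}^{T-1}|m_r|$ is nondecreasing in $s$ with $b_{T-1}=1$, and asserts that the alternating partial sums $A_j=\sum_{s\le j}a_s$ satisfy $|A_j|\le 1$. A single summation by parts then gives $x_T=-\eta\bigl(A_{T-1}u_{T-1}+\sum_{s=0}^{T-2}A_s(u_s-u_{s+1})\bigr)$.

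You never unroll. Your only ingredients are the one-step inclusion $m\cdot\operatorname{hull}\{0,-u\}\subseteq\operatorname{hull}\{0,u\}$ for $m\in[-1,0]$, and a triangle inequality that charges $|u_T-u_{T-1}|$ to the variation budget. This buys two things:
\begin{itemize}
\item The lemma on alternating sums with monotone magnitudes is absorbed into the induction. The paper states that lemma without the short pairing argument it needs.
\item The sign cancellation becomes visible one step at a time.
\end{itemize}

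The paper's version, in exchange, gives an exact representation with explicit coefficients. Since in fact $|A_s|\le b_s$, it immediately yields the sharper damped bound $|x_T|\le\eta\bigl(|u_{T-1}|+\sum_{s=0}^{T-2}b_s|u_{s+1}-u_s|\bigr)$. Your induction gives the same refinement if you keep the factor $|m_T|$, on both the error interval and the new increment, instead of discarding it. The error radius then obeys $W_{T+1}=|m_T|\bigl(W_T+|u_T-u_{T-1}|\bigr)$.

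Your closing remark about vector-valued $x_k,u_k$ with scalar $m_k$ is right, and it applies equally to the paper's argument.
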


The bound holds for arbitrary time-varying multipliers across the entire oscillatory stability window. When the curvature rises above $2/\eta$, \cref{prop:supercritical_recoil} forces the dynamics back through expanding sign reversals; inside the window, \cref{thm:robust_oscillatory} prevents secular growth. Together these explain the empirical observation~\citep{cohen2021gradient, damian2023selfstabilization} that sharpness hovers near $2/\eta$: excursions above the threshold drive parameters into regions of lower curvature, while near-threshold oscillations cancel rather than drift.

%% ====================================================================
\section{Conclusion}\label{sec:conclusion}
%% ====================================================================

The edge coupling $\mathcal{A}_\eta$ provides a unified account of the Edge of Stability. Its criticality conditions yield the step recurrence and the loss-change formula; summing the latter forces curvature to $2/\eta$ globally (\cref{thm:edge_balance}), resolving the locality limitation of prior work~\citep{damian2023selfstabilization}. The mean value theorem then localizes each average to the true Hessian eigenvalue at an interior point of each step segment, yielding exact forcing with no residual gap. The center reduction in \cref{thm:intrinsic_edge_potential} reduces period-two bifurcation to a nonlinear eigenproblem; for two-layer linear networks this reduction is width-invariant and the period-doubling branch appears continuously on the large-learning-rate side of the threshold (\cref{prop:linear_Q}).

\paragraph{Limitations and future directions.}
The global theorems require $C^2$ smoothness and the bifurcation analysis requires $C^4$, which excludes losses with ReLU activations unless smoothed. The forcing theorems guarantee that curvature visits $2/\eta$ but do not predict how long the system spends there or how the loss decreases in the EoS phase. Connecting the two-trajectory stability bound (\cref{sec:discrete}) to curvature-aware generalization bounds in the spirit of \citet{hardt2016train} is a natural next step.

\bibliographystyle{plainnat}
\bibliography{references}

\newpage
\appendix

%% ====================================================================
\section{Proofs from the Main Text}\label{app:proofs}
%% ====================================================================

This appendix collects the proofs of all results stated in \cref{sec:euler_action,sec:selfstrain,sec:eos,sec:edge_mechanisms}. We also develop two results referenced by the main text: a spectral analysis of the edge coupling at a fixed point (\cref{prop:edge_threshold}), which shows that the Hessian of $\mathcal{A}_\eta$ at a diagonal critical point $(\bar w, \bar w)$ becomes indefinite when an eigenvalue of $\nabla^2 L(\bar w)$ exceeds $2/\eta$; and a discrete stability bound (\cref{thm:discrete_stability}), which controls how far the step norm can grow when the curvature temporarily leaves the interval $[0, 2/\eta]$.

\paragraph{Proof of \cref{thm:euler_action}.}
\begin{proof}
\emph{(i)} Since $\nabla_x\mathcal A_\eta(x,y)=\nabla L(x)-\frac1\eta(x-y)$, the condition
$\nabla_x\mathcal A_\eta(x,y)=0$ is equivalent to $y=x-\eta\nabla L(x)$.

\emph{(ii)} We have $\nabla_y\mathcal A_\eta(x,y)=\nabla L(y)+\frac1\eta(x-y)$.
Evaluating at $(x,y)=(w_k,w_{k+1})$ and using $w_{k+1}-w_k=-\eta\nabla L(w_k)$ gives
\begin{align}
-\eta\nabla_y\mathcal A_\eta(w_k,w_{k+1})
&=
-\eta\nabla L(w_{k+1})-(w_k-w_{k+1})\\
&=
(w_{k+2}-w_{k+1})+(w_{k+1}-w_k)\\
&=
w_{k+2}-w_k.
\end{align}
The critical-point characterization follows by imposing both
$\nabla_x\mathcal A_\eta=0$ and $\nabla_y\mathcal A_\eta=0$.
\end{proof}

\paragraph{Proof of \cref{thm:propagator_loss} (Propagator and One-Step Loss Change).}
The strategy for part~(i) is to difference the criticality condition $\nabla_x \mathcal{A}_\eta = 0$ between two consecutive edges, which converts a gradient identity into a step-increment recurrence. For part~(ii), we slide one endpoint of the edge coupling from $w_k$ to $w_{k+1}$ along the step segment; the vanishing of the first derivative at $\tau = 0$ (which is the criticality condition itself) turns the Taylor expansion into a pure second-order expression.

\begin{proof}
\emph{(i)} Both consecutive pairs $(w_k,w_{k+1})$ and $(w_{k+1},w_{k+2})$ lie on $\Gamma_\eta$, so differencing the partial criticality condition between them and substituting $\nabla L(w_{k+1})-\nabla L(w_k)=\bar H_k d_k$ gives
\begin{align}
0
&=
\nabla_x\mathcal A_\eta(w_{k+1},w_{k+2})
-\nabla_x\mathcal A_\eta(w_k,w_{k+1})\\
&=
\bar H_k d_k
-\frac1\eta(d_k-d_{k+1}),
\end{align}
which rearranges to $d_{k+1}=(I-\eta\bar H_k)d_k$. The two-step displacement follows by adding $d_k$ to both sides.

\emph{(ii)} Fix $k$ and consider sliding the first argument of $\mathcal{A}_\eta$ along the step: define $f_k(\tau)\triangleq \mathcal A_\eta(w_k+\tau d_k,w_{k+1})$ for $\tau\in[0,1]$.
At $\tau = 0$ we are at the actual edge $(w_k, w_{k+1})$, and at $\tau = 1$ both arguments coincide at $w_{k+1}$. The partial criticality condition ensures that the linear term vanishes: $f_k'(0) = 0$. The boundary values and second derivative are
\begin{align}
f_k(0)
&=
L(w_k)+L(w_{k+1})-\frac{1}{2\eta}\|d_k\|_2^2,\\
f_k(1)
&=
\mathcal A_\eta(w_{k+1},w_{k+1})
=
2L(w_{k+1}),\\
f_k''(\tau)&=d_k^\top \nabla^2L(w_k+\tau d_k)d_k-\frac1\eta\|d_k\|_2^2.
\end{align}
The Taylor formula with integral remainder, $f_k(1)-f_k(0)=\int_0^1 (1-\tau)f_k''(\tau)\,d\tau$, then gives after substitution
\begin{align}
L(w_{k+1})-L(w_k)
=
-\frac{1}{\eta}\|d_k\|_2^2+\frac12\,d_k^\top \widetilde H_k d_k.
\end{align}
The telescoping identity follows by summing over $k$.
\end{proof}

The forcing and concentration theorems in the main text describe the trajectory globally, but the edge coupling also contains local information at a fixed point. The Hessian of $\mathcal{A}_\eta$ at a diagonal critical point $(\bar w, \bar w)$ acts on the product space $\mathbb{R}^d \times \mathbb{R}^d$, and its eigendirections split into two natural families: diagonal directions $(u, u)$ that move both iterates together, and anti-diagonal directions $(u, -u)$ that push them apart. The following proposition shows that the diagonal directions are always stable (they see only the curvature $H$), while the anti-diagonal directions become unstable when any eigenvalue of $H$ crosses $2/\eta$. This provides a spectral interpretation of the Edge of Stability directly from the second-order structure of the edge coupling.

\begin{proposition}[Spectral threshold of the edge coupling]\label{prop:edge_threshold}
Let $\bar w$ be a critical point of $L$ with $H=\nabla^2L(\bar w)$.
Then $(\bar w,\bar w)$ is a critical point of $\mathcal A_\eta$, with Hessian
\begin{align}
\nabla^2\mathcal A_\eta(\bar w,\bar w)
=
\begin{pmatrix}
H-\frac1\eta I & \frac1\eta I\\
\frac1\eta I & H-\frac1\eta I
\end{pmatrix}.
\end{align}
Along the diagonal and anti-diagonal directions, this reduces to
\begin{align}
\nabla^2\mathcal A_\eta(\bar w,\bar w)[(u,u),(u,u)]
&=
2\,u^\top H u,\\
\nabla^2\mathcal A_\eta(\bar w,\bar w)[(u,-u),(u,-u)]
&=
2\,u^\top\!\left(H-\frac{2}{\eta}I\right)u.
\end{align}
The first expression is always nonnegative at a local minimum, so diagonal perturbations are stable. The second changes sign when $\lambda_{\max}(H)$ exceeds $2/\eta$: at that threshold, the fixed point of the edge coupling loses stability to perturbations that split the two iterates apart, which is the onset of period-two oscillation.
\end{proposition}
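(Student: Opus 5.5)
The plan is a direct computation of first and second derivatives of $\mathcal A_\eta$ at the diagonal point, followed by evaluation of the resulting quadratic form on the two special families of directions.

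\emph{Criticality.} From the proof of \cref{thm:euler_action},
\begin{align}
\nabla_x\mathcal A_\eta(x,y)&=\nabla L(x)-\tfrac1\eta(x-y),\\
\nabla_y\mathcal A_\eta(x,y)&=\nabla L(y)+\tfrac1\eta(x-y).
\end{align}
At $(\bar w,\bar w)$ the coupling terms vanish, so both partial gradients reduce to $\nabla L(\bar w)=0$. Hence $(\bar w,\bar w)$ is a critical point. Equivalently, it is a fixed point in the characterization of \cref{thm:euler_action}(ii).

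\emph{Hessian.} Differentiate the gradients once more. The term $L(x)+L(y)$ contributes the block-diagonal part $\mathrm{diag}(\nabla^2L(x),\nabla^2L(y))$. The coupling term $-\frac1{2\eta}\|x-y\|^2$ has constant Hessian
\begin{align}
-\tfrac1\eta\begin{pmatrix} I & -I\\ -I & I\end{pmatrix}.
\end{align}
Evaluating at $x=y=\bar w$ gives the stated block matrix, with diagonal blocks $H-\frac1\eta I$ and off-diagonal blocks $\frac1\eta I$. Since $L$ is $C^2$, the Hessian is well defined at this point.

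\emph{Quadratic forms.} For a direction $(u,v)$ the form equals
\begin{align}
u^\top Hu+v^\top Hv-\tfrac1\eta\|u-v\|^2.
\end{align}
Setting $v=u$ kills the coupling term and leaves $2u^\top Hu$. Setting $v=-u$ gives
\begin{align}
2u^\top Hu-\tfrac4\eta\|u\|^2=2u^\top\bigl(H-\tfrac2\eta I\bigr)u.
\end{align}

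\emph{Interpretive claims.} At a local minimum $H\succeq0$, so the diagonal form is nonnegative. The anti-diagonal form is $2\|u\|^2\bigl(\lambda-\frac2\eta\bigr)$ on each eigenvector $u$ of $H$ with eigenvalue $\lambda$. So its maximum over unit $u$ is positive exactly when $\lambda_{\max}(H)>2/\eta$, and it is nonpositive otherwise. This is the claimed sign change, and it matches \eqref{eq:hessian_reduced_profile}.

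There is no real obstacle here. The only point needing care is the wording "stable/loses stability." I would state it precisely as a sign change of the anti-diagonal quadratic form, i.e.\ $\nabla^2\mathcal A_\eta$ acquires a positive direction in the anti-diagonal subspace. I would not assert a dynamical stability claim; that is left to \cref{thm:intrinsic_edge_potential} and \cref{cor:generic_edge_branches}.
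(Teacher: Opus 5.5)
Your proposal is correct and follows the paper's proof: you differentiate $\mathcal A_\eta$ twice, evaluate the block Hessian at $(\bar w,\bar w)$, and substitute $(u,\pm u)$ into the quadratic form. You also check criticality explicitly and read ``loses stability'' only as a sign change of the anti-diagonal form, which is more careful than the paper's one-line argument but follows the same route.
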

\begin{proof}
Differentiating $\mathcal A_\eta$ twice gives
\begin{align}
\nabla^2\mathcal A_\eta(x,y)
=
\begin{pmatrix}
\nabla^2L(x)-\frac1\eta I & \frac1\eta I\\
\frac1\eta I & \nabla^2L(y)-\frac1\eta I
\end{pmatrix}.
\end{align}
Evaluating at $(\bar w,\bar w)$ gives the stated formula, and the diagonal and
anti-diagonal restrictions follow by direct substitution.
\end{proof}

\paragraph{Proof of \cref{thm:intrinsic_edge_potential}.}
\begin{proof}
Define the center-balance map and the symmetrized loss by
\begin{align}
F(m,a)&\triangleq \frac12\bigl(\nabla L(m+a)+\nabla L(m-a)\bigr),\\
G(m,a)&\triangleq \frac12\bigl(L(m+a)+L(m-a)\bigr).
\end{align}
Since $F(\bar w,0)=0$ and $D_mF(\bar w,0)=H$ is invertible, the implicit function theorem gives a unique smooth map
$m(a)$ near $a=0$ with $m(0)=\bar w$ and $F(m(a),a)=0$, proving
\eqref{eq:center_balance}. Because $F(m,-a)=F(m,a)$, uniqueness implies
$m(-a)=m(a)$.

Setting $\mathcal P(a)\triangleq G(m(a),a)$ gives
\begin{align}
\Phi_\eta(a)
=
\Psi_\eta(m(a),a)
=
\mathcal P(a)-\frac{1}{\eta}\|a\|^2,
\end{align}
which is \eqref{eq:Phi_eta_edge_profile}. Since $D_mG(m(a),a)=F(m(a),a)=0$, the $m$-dependence drops out when differentiating $\mathcal P$, and the chain rule gives
\begin{align}
D\mathcal P(a)[h]
&=
D_aG(m(a),a)[h]
=
\frac12\Bigl\langle \nabla L(m(a)+a)-\nabla L(m(a)-a),\,h\Bigr\rangle,
\end{align}
which is \eqref{eq:edge_profile_gradient}. It follows that $\nabla\Phi_\eta(a)=\nabla\mathcal P(a)-\frac{2}{\eta}a$, proving \eqref{eq:nonlinear_eigenproblem}. Since $F(m(a),a)=0$ already
imposes the $m$-criticality equation, $\nabla\Phi_\eta(a)=0$ is equivalent
to full criticality of $\Psi_\eta$, hence of $\mathcal A_\eta$ under the
change of variables $(x,y)=(m-a,m+a)$.

Differentiating \eqref{eq:edge_profile_gradient} at $a=0$ and using $Dm(0)=0$ (since $m$ is even) gives $\nabla^2\mathcal P(0)=H$, and therefore
$\nabla^2\Phi_\eta(0)=H-\frac{2}{\eta}I$.
\end{proof}

\paragraph{Proof of \cref{prop:quartic_jet}.}
\begin{proof}
Because $m$ is even, $Dm(0)=0$. Differentiating the identity
$F(m(a),a)=0$ twice at $a=0$ and solving for the second derivative of the center map gives
\begin{align}
H\,D^2m(0)[h,h]+\nabla^3L(\bar w)[h,h,\cdot]&=0,\\
D^2m(0)[h,h]
&=
-H^{-1}\nabla^3L(\bar w)[h,h,\cdot].
\end{align}
Taylor expanding the even map $m$ then yields \eqref{eq:center_map_intrinsic}:
\begin{align}
m(a)
=
\bar w-\frac12H^{-1}\nabla^3L(\bar w)[a,a,\cdot]+O(\|a\|^4).
\end{align}

To obtain the quartic jet, set $p(a)\triangleq m(a)-\bar w = O(\|a\|^2)$.
Taylor expanding $L(\bar w+p\pm a)$ through quartic order and averaging gives
\begin{align}
\mathcal P(a)
&=
L(\bar w)
+\frac12\langle H a,a\rangle
+\frac12\langle H p,p\rangle \\
&+\frac12\Bigl\langle \nabla^3L(\bar w)[a,a,\cdot],\,p\Bigr\rangle
+\frac1{24}\nabla^4L(\bar w)[a,a,a,a]
+o(\|a\|^4).
\end{align}
Substituting $p(a) = -\frac12H^{-1}\nabla^3L(\bar w)[a,a,\cdot] + o(\|a\|^2)$ and collecting quartic terms yields $\frac14\,\mathcal Q(a)$, proving
\eqref{eq:edge_profile_expansion} and hence \eqref{eq:Phi_eta_expansion_intrinsic}.
\end{proof}

\paragraph{Proof of \cref{cor:generic_edge_branches} (Generic branching at the edge).}
\begin{proof}
Write $\mu \triangleq \frac{2}{\eta_c}-\frac{2}{\eta}$ for the bifurcation parameter. By \cref{thm:intrinsic_edge_potential}, the reduced functional has the expansion
\begin{align}
\Phi_\eta(a)
=
L(\bar w)
+\frac12\langle(H-\tfrac{2}{\eta}I)a,a\rangle
+\frac14\mathcal Q(a)
+o(\|a\|^4+|\mu|\,\|a\|^2).
\end{align}

We perform a Lyapunov--Schmidt reduction. Decompose $a=\xi+b$ with $\xi\in E_c$ and $b\in E_c^\perp$.
Since $H-\frac{2}{\eta_c}I$ is invertible on $E_c^\perp$,
the implicit function theorem solves the $E_c^\perp$-component of
$\nabla\Phi_\eta(\xi+b)=0$ uniquely as $b=b(\xi,\eta)=O(\|\xi\|^3+|\mu|\,\|\xi\|)$. Substituting back reduces the problem to a potential on $E_c$ alone:
\begin{align}
\varphi_\eta(\xi)
=
L(\bar w)
+\frac12\mu\|\xi\|^2
+\frac14\mathcal Q(\xi)
+o(\|\xi\|^4+|\mu|\,\|\xi\|^2).
\end{align}

Writing $\xi=\alpha v$ with $\alpha\ge 0$ and $v\in S(E_c)$ separates the amplitude from the direction. The gradient of the reduced potential is
\begin{align}
\nabla\varphi_\eta(\alpha v)
=
\mu\alpha v+\frac14\alpha^3\nabla\mathcal Q(v)
+o(\alpha^3+|\mu|\alpha).
\end{align}
The tangential projection to $S(E_c)$ forces $\nabla_{S}(\mathcal Q|_{S(E_c)})(v)=o(1)$,
so every small nontrivial critical point has its direction tending to a critical point of
$\mathcal Q|_{S(E_c)}$. If $u$ is nondegenerate, the implicit
function theorem gives a unique nearby direction branch $v(\eta)\to u$.

For the amplitude, we project radially and use Euler's identity $\langle v,\nabla\mathcal Q(v)\rangle = 4\mathcal Q(v)$ for the homogeneous quartic to obtain
\begin{align}
0
&=
\mu+\alpha^2\mathcal Q(u)+o(|\mu|),\\
\alpha(\eta)^2
&=
\frac{\frac{2}{\eta}-\frac{2}{\eta_c}}{\mathcal Q(u)}
+o(|\eta-\eta_c|).
\end{align}
This is real and positive on the side where $\bigl(\frac{2}{\eta}-\frac{2}{\eta_c}\bigr)\mathcal Q(u)>0$.
\end{proof}

\paragraph{Proof of \cref{prop:triple_consequences} (Roles of {\boldmath$2/\eta$}).}
\begin{proof}
\Cref{thm:propagator_loss} gives the propagator \eqref{eq:dk_propagator} and the loss-change formula \eqref{eq:loss_change_exact}. For the two-step return, we compute the displacement directly:
\begin{align}
w_{k+2}-w_k = d_{k+1}+d_k = (I-\eta\bar H_k)d_k+d_k = (2I-\eta\bar H_k)d_k.
\end{align}
Since $d_k \neq 0$, we have $w_{k+2}=w_k$ if and only if $\bar H_k d_k=(2/\eta)d_k$. The descent/ascent boundary follows from $\|d_k\|_2^2/(2\eta)>0$.
\end{proof}

\paragraph{Proof of \cref{thm:edge_balance} (Curvature concentration at {\boldmath$2/\eta$}).}
\begin{proof}
The loss-change formula (\cref{prop:triple_consequences}) writes each step as $L(w_k)-L(w_{k+1}) = \frac{\|d_k\|_2^2}{2}(\frac{2}{\eta}-\widetilde r_k)$. Summing from $k=0$ to $K-1$, the left side telescopes while the right side accumulates:
\begin{align}
L(w_0)-L(w_K) &= \frac{1}{2}\sum_{k=0}^{K-1}\|d_k\|_2^2\!\left(\frac{2}{\eta}-\widetilde r_k\right),
\end{align}
which is \cref{eq:edge_balance_identity}. Dividing by $E_K>0$ yields
\cref{eq:edge_balance_weighted}. The left side of \cref{eq:edge_balance_weighted} is a weighted average of
$\widetilde r_k$, so it is bounded above by $\max_{k<K}\widetilde r_k$. Using $L(w_K)\ge L_{\inf}$ then gives the forcing bound and the asymptotic concentration:
\begin{align}
\max_{k<K}\widetilde r_k &\;\ge\;
\frac{2}{\eta} - \frac{2(L(w_0)-L_{\inf})}{E_K},\\
\left|\frac{\sum_{k<K}\|d_k\|_2^2\,\widetilde r_k}{E_K} - \frac{2}{\eta}\right|
&= \frac{2|L(w_0)-L(w_K)|}{E_K} \\
&\;\le\; \frac{2\max\{|L(w_0)-L_{\inf}|,\,|L_{\sup}-L(w_0)|\}}{E_K} \to 0.
\end{align}
\end{proof}

\paragraph{Proof of \cref{thm:edge_window} (Concentration within a window of {\boldmath$2/\eta$}).}
\begin{proof}
Write $x_k \triangleq 2/\eta - \widetilde r_k$ and decompose $x = x_+ - (-x)_+$. Combined with the telescoping identity (\cref{thm:edge_balance}), this gives \cref{eq:signed_balance}:
\begin{align}
B_K^- - B_K^+ = \sum_{k=0}^{K-1}\|d_k\|_2^2 x_k = 2\bigl(L(w_0)-L(w_K)\bigr).
\end{align}

For the subcritical bound \cref{eq:window_bound_sub}, when $\widetilde r_k\le 2/\eta-\delta$ we have $(2/\eta-\widetilde r_k)_+\ge\delta$; for the supercritical bound \cref{eq:window_bound_super}, when $\widetilde r_k\ge 2/\eta+\delta$ we have $(\widetilde r_k-2/\eta)_+\ge\delta$. Markov-type estimates then give
\begin{align}
\delta\sum_{\widetilde r_k\le 2/\eta-\delta}\|d_k\|_2^2
&\;\le\; B_K^-
= 2\bigl(L(w_0)-L(w_K)\bigr)+B_K^+
\;\le\; 2\bigl(L(w_0)-L_{\inf}\bigr)+B_K^+,\\
\delta\sum_{\widetilde r_k\ge 2/\eta+\delta}\|d_k\|_2^2 &\;\le\; B_K^+.
\end{align}
Summing both inequalities and passing to $K\to\infty$ under $B_\infty^+<\infty$ gives
\cref{eq:finite_outside_window}. Dividing by $E_K\to\infty$ yields \cref{eq:window_concentration}.
\end{proof}

The step recurrence $d_{k+1} = (I - \eta \bar H_k) d_k$ from \cref{thm:propagator_loss} has the form of a discrete propagator with time-varying coefficient. When all eigenvalues of $\bar H_k$ lie in the interval $[0, 2/\eta]$, each one-step multiplier $|1 - \eta \lambda|$ is at most one, so the step norm cannot grow. Outside this interval, however, some multiplier exceeds one and the step norm inflates. The question is: by how much? The following definition measures the degree of instability at each step, and the theorem shows that the cumulative effect of transient excursions outside $[0, 2/\eta]$ is controlled by their sum.

\begin{definition}[Excursion beyond the stability window]\label{def:supercritical}
For the recurrence $d_{k+1} = (I - \eta A_k) d_k$, define
\begin{align}\label{eq:supercritical}
\kappa_k \triangleq \max\bigl\{0,\; \eta\lambda_{\max}(A_k) - 2,\; -\eta\lambda_{\min}(A_k)\bigr\}.
\end{align}
When $\kappa_k = 0$, the spectrum lies entirely within $[0, 2/\eta]$ and the one-step map is a contraction.
\end{definition}

\begin{theorem}[Discrete Stability Bound]\label{thm:discrete_stability}
Assume $A_k$ is symmetric for all $k$. Then the discrete propagator satisfies
\begin{align}\label{eq:discrete_propagator_bound}
\|\mathcal{T}[k, s]\|_{\mathrm{op}} \le \exp\!\left(\sum_{r=s}^{k-1} \kappa_r\right),
\end{align}
and consequently the discrete strain satisfies
\begin{align}\label{eq:discrete_strain_bound}
\|\delta_k\|_2 \le \eta \sum_{s=0}^{k-1} \exp\!\left(\sum_{r=s+1}^{k-1} \kappa_r\right) \|f_s\|_2.
\end{align}
\end{theorem}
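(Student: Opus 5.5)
The proof has two parts: a per-step operator-norm bound, then a discrete variation-of-constants formula. Throughout, $\mathcal T[k,s]$ is the discrete propagator
\begin{align}
\mathcal T[k,s]\triangleq (I-\eta A_{k-1})(I-\eta A_{k-2})\cdots(I-\eta A_s), \qquad \mathcal T[s,s]=I .
\end{align}
The strain $\delta_k$ is taken to satisfy the forced recurrence
\begin{align}
\delta_{k+1}=(I-\eta A_k)\delta_k-\eta f_k, \qquad \delta_0=0 .
\end{align}
This is the inhomogeneous form of the propagator in \cref{thm:propagator_loss}(i), the same shape as the scalar model in \cref{thm:robust_oscillatory}. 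I fix these conventions first, since the statement uses them implicitly.

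\textbf{Step 1: one-step bound.} Since $A_r$ is symmetric, $I-\eta A_r$ is symmetric. Its operator norm is therefore $\max_\lambda |1-\eta\lambda|$ over the spectrum of $A_r$, and the maximum is attained at $\lambda_{\max}(A_r)$ or $\lambda_{\min}(A_r)$. I check three cases for an eigenvalue $\lambda$.
\begin{itemize}
\item If $\lambda\in[0,2/\eta]$, then $|1-\eta\lambda|\le 1$.
\item If $\lambda>2/\eta$, then $|1-\eta\lambda|=\eta\lambda-1=1+(\eta\lambda-2)\le 1+(\eta\lambda_{\max}(A_r)-2)\le 1+\kappa_r$.
\item If $\lambda<0$, then $|1-\eta\lambda|=1-\eta\lambda\le 1-\eta\lambda_{\min}(A_r)\le 1+\kappa_r$.
\end{itemize}
So $\|I-\eta A_r\|_{\mathrm{op}}\le 1+\kappa_r\le e^{\kappa_r}$.

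\textbf{Step 2: propagator bound.} Submultiplicativity of the operator norm over the product defining $\mathcal T[k,s]$ gives
\begin{align}
\|\mathcal T[k,s]\|_{\mathrm{op}}\le \prod_{r=s}^{k-1} e^{\kappa_r}=\exp\Bigl(\sum_{r=s}^{k-1}\kappa_r\Bigr),
\end{align}
which is \eqref{eq:discrete_propagator_bound}. The empty product covers $k=s$.

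\textbf{Step 3: strain bound.} A short induction on $k$ gives the Duhamel formula
\begin{align}
\delta_k=-\eta\sum_{s=0}^{k-1}\mathcal T[k,s+1]\,f_s .
\end{align}
Taking norms and applying Step 2 to each $\mathcal T[k,s+1]$ yields \eqref{eq:discrete_strain_bound}, with the exponent summed over $r$ from $s+1$ to $k-1$.

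The only real obstacle is the bookkeeping in Step 3: the index shift $s+1$ in the Duhamel formula, the sign of the forcing term, and the requirement $\delta_0=0$. If the appendix instead defines the strain with a nonzero initial value or a differently placed forcing term, I would add the corresponding homogeneous term $\mathcal T[k,0]\delta_0$ to the formula, or shift indices. The core estimate is unchanged either way.
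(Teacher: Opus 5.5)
Your proposal is correct and follows the paper's proof essentially step for step: the three-case eigenvalue bound $\|I-\eta A_r\|_{\mathrm{op}}\le 1+\kappa_r$, then submultiplicativity with $1+x\le e^x$, then the variation-of-constants formula $\delta_k=-\eta\sum_{s=0}^{k-1}\mathcal T[k,s+1]f_s$ followed by the triangle inequality. One correction on provenance: the forced recurrence with $\delta_0=0$ that you posit is exactly the two-trajectory Kelvin--Voigt recurrence of \cref{thm:discrete_kv} (with $\delta_k=w_k-w_k'$ and $f_k$ the gradient mismatch), not an inhomogeneous version of \cref{thm:propagator_loss}(i), so your conventions already match the paper's and no index adjustment is needed.
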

\begin{proof}
For a symmetric matrix $A$ with eigenvalues $\{\lambda_i\}$, the operator norm of the one-step propagator is $\|I - \eta A\|_{\mathrm{op}} = \max_i |1 - \eta\lambda_i|$. When $\eta\lambda_i \in [0,2]$ the multiplier satisfies $|1-\eta\lambda_i| \le 1$; when $\eta\lambda_i > 2$ we have $|1-\eta\lambda_i| = \eta\lambda_i - 1 \le 1 + \kappa$; and when $\lambda_i < 0$ we have $|1-\eta\lambda_i| = 1 + \eta|\lambda_i| \le 1 + \kappa$. In every case $\|I - \eta A_k\|_{\mathrm{op}} \le 1 + \kappa_k$. Applying this bound stepwise:
\begin{align}
\|\mathcal{T}[k, s]\|_{\mathrm{op}} \le \prod_{r=s}^{k-1} (1 + \kappa_r) \le \exp\!\left(\sum_{r=s}^{k-1} \kappa_r\right),
\end{align}
using $\log(1+x) \le x$. The strain bound follows from iterating the recurrence
\begin{align}
\delta_{k+1} = (I - \eta A_k)\delta_k - \eta f_k
\end{align}
to obtain the variation-of-constants representation
\begin{align}
\delta_k = -\eta \sum_{s=0}^{k-1} \mathcal{T}[k, s{+}1]\, f_s,
\end{align}
then applying the propagator bound \cref{eq:discrete_propagator_bound} together with the triangle inequality.
\end{proof}

\paragraph{Proof of \cref{cor:near_periodic} (Near-Periodicity Implies Near-Critical Sharpness).}
\begin{proof}
By the mean value theorem, $\bar{H}_k d_k = \nabla L(w_{k+1}) - \nabla L(w_k)$. We rewrite the gradient difference in terms of the gradient sum to extract the $2/\eta$ threshold:
\begin{align}
\bar{H}_k d_k &= \nabla L(w_{k+1}) - \nabla L(w_k) \\
&= \bigl(\nabla L(w_{k+1}) + \nabla L(w_k)\bigr) - 2\nabla L(w_k) \\
&= -\frac{1}{\eta}(w_{k+2} - w_k) + \frac{2}{\eta}\,d_k.
\end{align}
Taking the inner product with $d_k$, dividing by $\|d_k\|_2^2$, and applying Cauchy--Schwarz to the displacement term gives \cref{eq:near_periodic_bound}.
\end{proof}

\begin{figure}[htbp]
\centering
\includegraphics[width=0.48\linewidth]{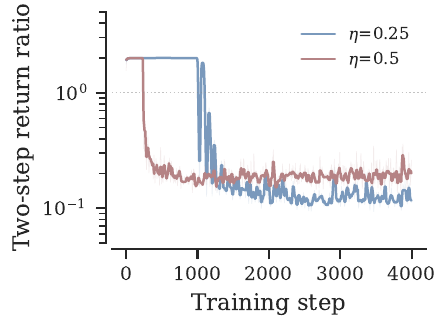}
\caption{\textbf{Two-step return ratio.}
$\|w_{k+2}-w_k\|/\|d_k\|$ vs.\ training step for two learning rates (solid: rolling median; faint: raw). Before EoS onset the ratio exceeds $1$, reflecting the progressive sharpening phase in which consecutive steps reinforce rather than reverse. At EoS onset, the ratio drops from $O(1)$ toward $\sim 0.15$, indicating
approximate (but not exact) period-two behavior; by
\cref{cor:near_periodic}, this directly bounds how close the directional
curvature of $\bar{H}_k$ is to $2/\eta$.}
\label{fig:near_periodicity}
\end{figure}

%% ====================================================================
\section{From Hessian Averages to True Sharpness}\label{app:true_sharpness}
%% ====================================================================

The concentration theorems of \cref{sec:eos} show that the step-averaged curvatures $\widetilde r_k$ and $\bar r_k$ are forced toward $2/\eta$, but the Edge of Stability as observed experimentally involves the largest Hessian eigenvalue at actual points in parameter space. This appendix shows that each averaged curvature is in fact the exact directional curvature of the true Hessian at a specific interior point of the step segment, so the forcing transfers with no residual error.

\begin{definition}[Curvature along a step]\label{def:curvature_profile}
Fix a step $k$ with $d_k \neq 0$ and set $u_k \triangleq d_k / \|d_k\|_2$. Define
\begin{align}
q_k(\tau) \triangleq u_k^\top \nabla^2 L(w_k + \tau d_k)\, u_k, \qquad \tau \in [0,1].
\end{align}
\end{definition}

The uniform average of $q_k$ gives $\bar r_k$ and the triangularly weighted average gives $\widetilde r_k$. Since these are averages of a continuous function over $[0,1]$, the mean value theorem guarantees that each is attained as a pointwise value of $q_k$.

\begin{proof}[Proof of \cref{thm:true_sharpness}]
Define the scalar edge restriction $g_k(t) \triangleq L(w_k + t\,d_k)$ for $t \in [0,1]$. Its derivatives are
\begin{align}
g_k'(t) = \langle \nabla L(w_k + t\,d_k),\, d_k\rangle, \qquad
g_k''(t) = d_k^\top \nabla^2 L(w_k + t\,d_k)\, d_k = \|d_k\|_2^2\, q_k(t).
\end{align}
The gradient-descent update gives $g_k'(0) = \langle \nabla L(w_k), d_k\rangle = -\|d_k\|_2^2/\eta$.

\emph{Localization of $\widetilde r_k$.} Taylor's theorem with Lagrange remainder gives
\begin{align}
g_k(1) = g_k(0) + g_k'(0) + \tfrac{1}{2}\,g_k''(\xi_k)
\end{align}
for some $\xi_k \in (0,1)$. The integral form of the same expansion (\cref{thm:propagator_loss}(ii)) gives
\begin{align}
g_k(1) - g_k(0) = g_k'(0) + \tfrac{1}{2}\|d_k\|_2^2\,\widetilde r_k.
\end{align}
Comparing the two and dividing by $\frac{1}{2}\|d_k\|_2^2$ yields $\widetilde r_k = q_k(\xi_k) = u_k^\top \nabla^2 L(w_k + \xi_k d_k)\, u_k$.

\emph{Localization of $\bar r_k$.} The ordinary mean value theorem applied to $h_k(t) \triangleq g_k'(t)$ gives $h_k(1) - h_k(0) = h_k'(\zeta_k)$
for some $\zeta_k \in (0,1)$. The left side equals $\langle \nabla L(w_{k+1}) - \nabla L(w_k),\, d_k\rangle = \|d_k\|_2^2\,\bar r_k$; the right side equals $g_k''(\zeta_k) = \|d_k\|_2^2\,q_k(\zeta_k)$. Dividing gives $\bar r_k = q_k(\zeta_k) = u_k^\top \nabla^2 L(w_k + \zeta_k d_k)\, u_k$.

\emph{Eigenvalue bounds.} Since $u_k$ is a unit vector, $u_k^\top \nabla^2 L(z)\, u_k \le \lambda_{\max}(\nabla^2 L(z))$ for any $z$.

\emph{Forcing inequality.} Substituting $\widetilde r_k \le \lambda_{\max}(\nabla^2 L(w_k + \xi_k d_k))$ into \cref{eq:edge_balance_identity} gives \cref{eq:edge_localized_forcing}. Using $L(w_K) \ge L_{\inf}$ gives \cref{eq:edge_localized_max}.

\emph{Propagator bound.} From \cref{cor:near_periodic}, $\bar r_k \ge 2/\eta - \|w_{k+2}-w_k\|_2/(\eta\|d_k\|_2)$. Since $\bar r_k \le \lambda_{\max}(\nabla^2 L(w_k + \zeta_k d_k))$, the bound \cref{eq:edge_localized_propagator} follows.
\end{proof}

%% ====================================================================
\section{Growth above the threshold and oscillatory cancellation}\label{app:recoil_oscillatory}%% ====================================================================

This appendix proves the two stability mechanisms from \cref{sec:edge_mechanisms}, which together explain why the dynamics remains bounded at the edge.

The first mechanism is growth above the threshold (\cref{prop:supercritical_recoil}). When the curvature exceeds $2/\eta$, the one-step multiplier along the step direction has magnitude greater than one, so the step norm grows geometrically. On any bounded trajectory this growth is unsustainable, which means the dynamics cannot remain above the threshold for more than a few steps before being forced back below it.

The second mechanism is oscillatory cancellation (\cref{thm:robust_oscillatory}). Once the curvature is back inside the stability window $[0, 2/\eta]$, the multiplier is close to $-1$, so the step direction alternates sign at each iteration. One might worry that small asymmetries in this alternation could accumulate into secular drift. The cancellation theorem shows that this does not happen: the total displacement is controlled by the last forcing value plus the total variation of the forcing sequence, regardless of how the multipliers vary within the window.

\begin{proof}[Proof of \cref{prop:supercritical_recoil}]
The propagator identity gives $d_{k+1}=(I-\eta\bar H_k)d_k$. Taking the inner product with $d_k$:
\begin{align}
\langle d_{k+1},d_k\rangle
= \|d_k\|_2^2 - \eta\,d_k^\top\bar H_k d_k
= (1-\eta\bar r_k)\|d_k\|_2^2.
\end{align}
If $\bar r_k\ge 2/\eta+\delta$, then $1-\eta\bar r_k\le -(1+\eta\delta)$, so
\begin{align}
\langle d_{k+1},d_k\rangle \le -(1+\eta\delta)\|d_k\|_2^2.
\end{align}
Cauchy--Schwarz then gives $\|d_{k+1}\|_2\ge (1+\eta\delta)\|d_k\|_2$.
Iterating from $j=s$ to $j=t-1$ yields $\|d_t\|_2\ge (1+\eta\delta)^{t-s}\|d_s\|_2$.
\end{proof}

\begin{proof}[Proof of \cref{thm:robust_oscillatory}]
Unrolling the recursion gives
\begin{align}
x_T=-\eta\sum_{s=0}^{T-1} a_s u_s,
\qquad
a_s\triangleq \prod_{r=s+1}^{T-1} m_r.
\end{align}
To exploit the alternating signs, write $a_s=(-1)^{T-1-s}b_s$, where
\begin{align}
b_s\triangleq \prod_{r=s+1}^{T-1}|m_r|.
\end{align}
Since $|m_r|\le 1$, the sequence $b_0\le b_1\le\cdots\le b_{T-1}=1$ is nondecreasing.
Define the partial sums $A_j\triangleq \sum_{s=0}^j a_s$. The alternating signs and the monotonicity of the $b_s$ sequence give $|A_j|\le 1$ for every $j$.
Applying Abel summation:
\begin{align}
\sum_{s=0}^{T-1} a_su_s
= A_{T-1}u_{T-1}+\sum_{s=0}^{T-2}A_s(u_s-u_{s+1}).
\end{align}
Taking absolute values and using $|A_j|\le 1$ gives \eqref{eq:robust_oscillatory_bound}.
\end{proof}

%% ====================================================================
\section{Curvature Concentration under Mini-Batch SGD}\label{app:sgd}
%% ====================================================================

The forcing theorems of \cref{sec:eos} were stated for full-batch gradient descent, but the telescoping structure that drives them does not require exact gradients. This appendix extends the edge-balance identity to mini-batch SGD.

When the gradient estimate at step $k$ is $\nabla L(w_k) + \xi_k$ rather than $\nabla L(w_k)$, the one-step loss change picks up two additional terms beyond the deterministic formula. The first is a cross term between the true gradient and the noise, which vanishes in expectation under the standard martingale-difference assumption. The second is a variance term proportional to $\|\xi_k\|_2^2$, which persists in expectation and shifts the balance identity by the cumulative squared noise magnitude. The telescoping structure otherwise survives intact.

\begin{theorem}[Stochastic curvature concentration]\label{thm:sgd_edge_balance}
Let
\begin{align}
w_{k+1}=w_k-\eta\bigl(\nabla L(w_k)+\xi_k\bigr),
\end{align}
where $\xi_k\in\mathbb R^d$ is an arbitrary noise sequence. Set
$s_k\triangleq w_{k+1}-w_k$, and define $\bar H_k$, $\widetilde H_k$, and
\begin{align}
\widetilde r_k\triangleq \frac{s_k^\top\widetilde H_k s_k}{\|s_k\|_2^2}
\qquad (s_k\neq 0)
\end{align}
along the stochastic step segment $w_k+\tau s_k$ exactly as in \cref{thm:propagator_loss}.
Then:

\textup{(i)} The step increments satisfy the exact forced propagator
\begin{align}\label{eq:sgd_propagator_exact}
s_{k+1}=(I-\eta\bar H_k)s_k-\eta(\xi_{k+1}-\xi_k).
\end{align}

\textup{(ii)} The one-step loss change satisfies
\begin{align}\label{eq:sgd_loss_balance_exact}
L(w_{k+1})-L(w_k)
=
-\frac{\|s_k\|_2^2}{2\eta}\Bigl(2-\eta\widetilde r_k\Bigr)
+\eta\langle \nabla L(w_k),\xi_k\rangle
+\eta\|\xi_k\|_2^2.
\end{align}

\textup{(iii)} Summing over $k=0,\dots,K-1$ gives the stochastic telescoping identity
\begin{align}\label{eq:sgd_edge_balance_exact}
\sum_{k=0}^{K-1}\|s_k\|_2^2\Bigl(\frac{2}{\eta}-\widetilde r_k\Bigr)
=
2\bigl(L(w_0)-L(w_K)\bigr)
+2\eta\sum_{k=0}^{K-1}\langle \nabla L(w_k),\xi_k\rangle
+2\eta\sum_{k=0}^{K-1}\|\xi_k\|_2^2.
\end{align}
If $\{\xi_k\}$ is a square-integrable martingale-difference sequence with respect to the SGD filtration, then
\begin{align}\label{eq:sgd_edge_balance_expectation}
\mathbb E\!\left[\sum_{k=0}^{K-1}\|s_k\|_2^2\Bigl(\frac{2}{\eta}-\widetilde r_k\Bigr)\right]
=
2\mathbb E\bigl[L(w_0)-L(w_K)\bigr]
+2\eta\sum_{k=0}^{K-1}\mathbb E\|\xi_k\|_2^2.
\end{align}
\end{theorem}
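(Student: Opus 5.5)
The plan is to repeat the proof of \cref{thm:propagator_loss} with the stochastic step $s_k=-\eta(\nabla L(w_k)+\xi_k)$ in place of $d_k$. The only change is that the first-order term of the Taylor expansion no longer collapses to $-\|s_k\|_2^2/\eta$ exactly; it leaves a noise residual, which we compute explicitly. Throughout, write $g_k\triangleq\nabla L(w_k)$. As in the deterministic case, $\bar H_k$ and $\widetilde H_k$ are defined by the integrals along $w_k+\tau s_k$, so the following hold unconditionally:
\begin{align}
\nabla L(w_{k+1})-\nabla L(w_k)&=\bar H_k s_k,\\
L(w_{k+1})-L(w_k)&=g_k^\top s_k+\tfrac12 s_k^\top\widetilde H_k s_k .
\end{align}
The first is the fundamental theorem of calculus applied to $\nabla L$. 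The second is Taylor's formula with integral remainder for $t\mapsto L(w_k+ts_k)$.

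\emph{Part (i).} Difference the update at steps $k+1$ and $k$:
\begin{align}
s_{k+1}-s_k=-\eta(g_{k+1}-g_k)-\eta(\xi_{k+1}-\xi_k)=-\eta\bar H_k s_k-\eta(\xi_{k+1}-\xi_k),
\end{align}
which rearranges to \eqref{eq:sgd_propagator_exact}.

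\emph{Part (ii).} Solve the update for the gradient, $g_k=-s_k/\eta-\xi_k$, so that $g_k^\top s_k=-\|s_k\|_2^2/\eta-\xi_k^\top s_k$. Substitute $s_k=-\eta(g_k+\xi_k)$ once more into the cross term, giving $-\xi_k^\top s_k=\eta\langle g_k,\xi_k\rangle+\eta\|\xi_k\|_2^2$. Inserting this into the Taylor identity gives
\begin{align}
L(w_{k+1})-L(w_k)=-\frac{\|s_k\|_2^2}{\eta}+\frac12 s_k^\top\widetilde H_k s_k+\eta\langle g_k,\xi_k\rangle+\eta\|\xi_k\|_2^2 .
\end{align}
For $s_k\ne0$, writing $s_k^\top\widetilde H_k s_k=\|s_k\|_2^2\widetilde r_k$ yields \eqref{eq:sgd_loss_balance_exact}. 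If $s_k=0$, we interpret $\|s_k\|_2^2\widetilde r_k$ as $s_k^\top\widetilde H_k s_k=0$; this convention also makes the summands in (iii) well defined.

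\emph{Part (iii).} Multiply the identity from (ii) by $-2$ and sum over $k=0,\dots,K-1$. The loss terms telescope to $2(L(w_0)-L(w_K))$, which gives \eqref{eq:sgd_edge_balance_exact} directly.

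\emph{Expectation identity.} Take expectations in \eqref{eq:sgd_edge_balance_exact}. Let $\mathcal F_k$ be the SGD filtration, so that $w_k$, and hence $g_k$, is $\mathcal F_k$-measurable, and the martingale-difference property is $\mathbb E[\xi_k\mid\mathcal F_k]=0$. The tower property then gives
\begin{align}
\mathbb E\langle g_k,\xi_k\rangle=\mathbb E\bigl\langle g_k,\mathbb E[\xi_k\mid\mathcal F_k]\bigr\rangle=0,
\end{align}
which removes the cross sum and leaves \eqref{eq:sgd_edge_balance_expectation}.

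The main obstacle is justifying the tower-property step, which requires $\langle g_k,\xi_k\rangle$ to be integrable. Square integrability of $\xi_k$ alone does not guarantee this unless $g_k$ is also square integrable. I would first try the following route. Square integrability of $\xi_k$ makes the left side of \eqref{eq:sgd_edge_balance_exact} and the loss terms integrable whenever the expectations in the statement are finite, which is implicitly assumed. Then $\mathbb E\|g_k\|_2^2<\infty$ follows inductively from $w_{k+1}=w_k-\eta(g_k+\xi_k)$ under the standing square-integrability of the iterates, and Cauchy--Schwarz gives integrability of the cross term. If that induction fails without extra assumptions, the fallback is to use the generalized conditional expectation: $\mathbb E[\langle g_k,\xi_k\rangle\mid\mathcal F_k]=0$ holds almost surely, and the cross term is then handled by a localization (stopping-time) argument. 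I would state explicitly whichever integrability hypothesis is actually used.
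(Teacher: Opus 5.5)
Your proof is correct and is essentially the paper's argument. You use the same differencing of the update for (i), the same Taylor identity with the first-order term rewritten via $s_k=-\eta(\nabla L(w_k)+\xi_k)$ for (ii), telescoping for (iii), and $\mathbb E[\langle\nabla L(w_k),\xi_k\rangle\mid\mathcal F_k]=0$ for the expectation identity. The paper takes that last tower-property step for granted, so your integrability caveat adds to its argument rather than departing from it; for instance, assuming square-integrable iterates makes $\nabla L(w_k)=(w_k-w_{k+1})/\eta-\xi_k$ square integrable directly from the update.
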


\begin{proof}
For \eqref{eq:sgd_propagator_exact}, write
\begin{align}
s_{k+1}
&=-\eta\bigl(\nabla L(w_{k+1})+\xi_{k+1}\bigr)\\
&=-\eta\bigl(\nabla L(w_k)+\xi_k\bigr)-\eta\bigl(\nabla L(w_{k+1})-\nabla L(w_k)\bigr)-\eta(\xi_{k+1}-\xi_k)\\
&=s_k-\eta\bar H_k s_k-\eta(\xi_{k+1}-\xi_k).
\end{align}

For \eqref{eq:sgd_loss_balance_exact}, the Taylor expansion with integral remainder and the substitution $s_k=-\eta(\nabla L(w_k)+\xi_k)$ give
\begin{align}
L(w_{k+1})-L(w_k)
&=\nabla L(w_k)^\top s_k+\frac12 s_k^\top\widetilde H_k s_k,\\
\nabla L(w_k)^\top s_k
&=-\eta\|\nabla L(w_k)\|_2^2-\eta\langle \nabla L(w_k),\xi_k\rangle\\
&=-\frac1\eta\|s_k\|_2^2+\eta\langle \nabla L(w_k),\xi_k\rangle+\eta\|\xi_k\|_2^2,
\end{align}
where the last line uses $\|s_k\|_2^2=\eta^2\|\nabla L(w_k)+\xi_k\|_2^2$.
Substituting yields \eqref{eq:sgd_loss_balance_exact}; summing yields \eqref{eq:sgd_edge_balance_exact}. The expectation identity follows from $\mathbb E[\langle \nabla L(w_k),\xi_k\rangle\mid\mathcal F_k]=0$.
\end{proof}

%% ====================================================================
\section{Discrete-Time Kelvin--Voigt Dynamics}\label{sec:discrete}
%% ====================================================================

The step recurrence $d_{k+1} = (I - \eta \bar H_k) d_k$ from \cref{thm:propagator_loss} governs how the step size of a single trajectory evolves. A natural next question is: how does the \emph{difference} between two trajectories evolve? This is the question of algorithmic stability~\citep{bousquet2002stability,hardt2016train}, and the edge coupling provides a natural framework for answering it.

The idea is to replace the homogeneous coupling $\mathcal{A}_\eta$, which uses the same loss at both iterates, with a heterogeneous variant that assigns different losses to the two positions:
\begin{align}\label{eq:het_euler_action}
\mathcal A_\eta^{S,S'}(x,y)
\;\triangleq\;
L_S(x)+L_{S'}(y)-\frac{1}{2\eta}\|x-y\|^2.
\end{align}
Setting $\nabla_x \mathcal{A}_\eta^{S,S'} = 0$ recovers the gradient-descent step on dataset $S$, while $\nabla_y \mathcal{A}_\eta^{S,S'} = 0$ recovers the step on dataset $S'$. The parameter deviation $\delta_k = w_k - w_k'$ between the two trajectories then satisfies a forced linear recurrence whose stability boundary is again $2/\eta$.

The rest of this appendix makes this precise. We first define the quantities that measure the deviation (\cref{def:discrete_strain_stress}), then derive the recurrence they satisfy (\cref{thm:discrete_kv}), and finally solve it via a discrete propagator (\cref{thm:discrete_strain_integral}).

\subsection{Discrete Algorithmic Strain and Data Stress}\label{subsec:discrete_strain}

When two copies of gradient descent are run on different datasets from a common initialization, the parameter deviation grows due to two distinct effects: the curvature of the loss landscape amplifies existing displacement, and the gradient mismatch between the two datasets injects new displacement at each step. To disentangle these, we introduce the following quantities. Consider gradient descent with step size $\eta > 0$ applied to two datasets:
\begin{align}\label{eq:gd_updates}
w_{k+1} = w_k - \eta \nabla L_S(w_k), \qquad w'_{k+1} = w'_k - \eta \nabla L_{S'}(w'_k),
\end{align}
with common initialization $w_0 = w'_0$.

\begin{definition}[Discrete Strain and Stress]\label{def:discrete_strain_stress}
The discrete algorithmic strain and data stress are
\begin{align}\label{eq:discrete_strain}
\delta_k \triangleq w_k - w'_k, \qquad f_k \triangleq \nabla L_S(w'_k) - \nabla L_{S'}(w'_k).
\end{align}
The discrete stability matrix is the segment-averaged Hessian
\begin{align}\label{eq:discrete_stability}
A_k \triangleq \int_0^1 \nabla^2 L_S\bigl(w'_k + \tau \delta_k\bigr)\, d\tau.
\end{align}
\end{definition}

\subsection{The Discrete Kelvin--Voigt Equation}\label{subsec:discrete_kv}

The strain $\delta_k$ evolves through two contributions at each step: the gradient of $L_S$ at $w_k$ versus at $w_k'$ (which depends on both the curvature and the current displacement), and the gradient mismatch $f_k$ between the two datasets at the reference point $w_k'$. The mean value theorem along the segment from $w_k'$ to $w_k$ absorbs the displacement dependence into the stability matrix $A_k$, separating curvature amplification from data-driven forcing.

\begin{theorem}[Discrete Kelvin--Voigt Variational Equation]\label{thm:discrete_kv}
The discrete algorithmic strain satisfies the recurrence
\begin{align}\label{eq:discrete_kv}
\delta_{k+1} = (I - \eta A_k)\,\delta_k - \eta f_k, \qquad \delta_0 = 0.
\end{align}
This is precisely the explicit Euler discretization with time step $\eta$ of the continuous Kelvin--Voigt equation \cref{eq:kelvin_voigt}.
\end{theorem}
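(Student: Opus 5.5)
The plan is to subtract the two update equations \eqref{eq:gd_updates} directly and split the resulting gradient mismatch into a curvature part and a data part. Writing
\begin{align}
\delta_{k+1} = w_{k+1}-w'_{k+1} = \delta_k - \eta\bigl(\nabla L_S(w_k)-\nabla L_{S'}(w'_k)\bigr),
\end{align}
I will add and subtract $\nabla L_S(w'_k)$ inside the parenthesis. This gives
\begin{align}
\nabla L_S(w_k)-\nabla L_{S'}(w'_k) = \bigl(\nabla L_S(w_k)-\nabla L_S(w'_k)\bigr) + \bigl(\nabla L_S(w'_k)-\nabla L_{S'}(w'_k)\bigr).
\end{align}
The second bracket is exactly $f_k$ by \cref{def:discrete_strain_stress}.

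For the first bracket, I apply the fundamental theorem of calculus along the segment $w'_k+\tau\delta_k$, $\tau\in[0,1]$. This is the same device that produced $\bar H_k$ in \eqref{eq:grad_diff_mvt} and in \cref{thm:propagator_loss}(i). Assuming $L_S$ is $C^2$, as in the standing hypotheses, the map $\tau\mapsto \nabla L_S(w'_k+\tau\delta_k)$ is $C^1$ with derivative $\nabla^2 L_S(w'_k+\tau\delta_k)\delta_k$. Integrating over $[0,1]$ gives $\nabla L_S(w_k)-\nabla L_S(w'_k)=A_k\delta_k$, with $A_k$ the segment-averaged Hessian of \eqref{eq:discrete_stability}. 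Substituting both pieces yields $\delta_{k+1}=(I-\eta A_k)\delta_k-\eta f_k$, and the common initialization $w_0=w'_0$ gives $\delta_0=0$.

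For the final sentence of the theorem, I will compare with the continuous Kelvin--Voigt equation \cref{eq:kelvin_voigt} from \cref{sec:continuous}. I expect it to read $\dot\delta = -A(t)\delta - f(t)$. Its forward Euler step with time step $\eta$, $\delta_{k+1}=\delta_k+\eta(-A_k\delta_k-f_k)$, is literally the recurrence just derived. The only point needing care here is matching the paper's continuous definitions of $A(t)$ and $f(t)$ with the discrete $A_k$ and $f_k$.

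I do not see a genuine obstacle; the argument is a two-line computation. The one thing to check is the ordering convention in $f_k$: the mismatch must be evaluated at the reference point $w'_k$, so that the curvature term is taken for $L_S$ along the segment from $w'_k$ to $w_k$.
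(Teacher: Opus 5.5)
Your proposal is correct and follows the paper's proof essentially step for step. You subtract the two updates, add and subtract $\nabla L_S(w'_k)$ to isolate $f_k$, and use the integral mean value theorem along $w'_k+\tau\delta_k$ to identify $A_k\delta_k$; the forward Euler correspondence you check at the end is handled in the paper by the subsequent remark rather than inside the proof, and your argument agrees with that remark.
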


\begin{proof}
Subtracting the GD updates \cref{eq:gd_updates}:
\begin{align}
\delta_{k+1} &= w_{k+1} - w'_{k+1} = \delta_k - \eta\bigl(\nabla L_S(w_k) - \nabla L_{S'}(w'_k)\bigr).
\end{align}
Adding and subtracting $\nabla L_S(w'_k)$:
\begin{align}
\delta_{k+1} &= \delta_k - \eta\bigl(\nabla L_S(w_k) - \nabla L_S(w'_k)\bigr) - \eta\underbrace{\bigl(\nabla L_S(w'_k) - \nabla L_{S'}(w'_k)\bigr)}_{= f_k}.
\end{align}
Applying the mean value theorem to the first bracketed term:
\begin{align}
\nabla L_S(w_k) - \nabla L_S(w'_k) = \Bigl(\int_0^1 \nabla^2 L_S(w'_k + \tau\delta_k)\,d\tau\Bigr)\delta_k = A_k\,\delta_k.
\end{align}
Substituting yields $\delta_{k+1} = (I - \eta A_k)\delta_k - \eta f_k$.
\end{proof}

\begin{remark}[From Continuous to Discrete]\label{rem:continuous_to_discrete}
Forward Euler discretization of $\partial_t \delta + A(t)\delta = -f(t)$ gives $\delta_{k+1} = (I - \eta A_k)\delta_k - \eta f_k$. \Cref{thm:discrete_kv} shows that the discrete gradient-descent strain satisfies this equation with no approximation, so the stability boundaries of explicit Euler are inherited by the actual dynamics.
\end{remark}

\subsection{The Discrete Propagator and Strain Integral}\label{subsec:discrete_propagator}

The recurrence in \cref{thm:discrete_kv} is a forced linear system, and such systems can be solved by variation of constants. The idea is to first understand the homogeneous part (how a displacement at step $s$ is transported forward to step $k$ by the curvature alone), and then sum up the contributions of each past stress $f_s$ after it has been transported forward. The object that encodes this forward transport is the discrete propagator.

\begin{definition}[Discrete Propagator]\label{def:discrete_propagator}
The discrete propagator $\mathcal{T}[k, s]$ for the system \cref{eq:discrete_kv} is
\begin{align}\label{eq:discrete_propagator}
\mathcal{T}[k, s] \triangleq \begin{cases}
(I - \eta A_{k-1})(I - \eta A_{k-2}) \cdots (I - \eta A_s), & k > s, \\
I, & k = s.
\end{cases}
\end{align}
\end{definition}

\begin{theorem}[Discrete Strain Propagation Formula]\label{thm:discrete_strain_integral}
The discrete algorithmic strain admits the representation
\begin{align}\label{eq:discrete_strain_integral}
\delta_k = -\eta \sum_{s=0}^{k-1} \mathcal{T}[k, s+1]\, f_s.
\end{align}
\end{theorem}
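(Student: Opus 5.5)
The plan is to prove the representation \eqref{eq:discrete_strain_integral} by induction on $k$, using only the recurrence of \cref{thm:discrete_kv} and the definition of the propagator in \cref{def:discrete_propagator}. For the base case $k=0$ the sum is empty, so the right-hand side is $0$, which matches $\delta_0=0$ (common initialization $w_0=w_0'$).

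For the inductive step, assume the formula holds at $k$. Substituting it into $\delta_{k+1}=(I-\eta A_k)\delta_k-\eta f_k$ gives
\begin{align}
\delta_{k+1} = -\eta\sum_{s=0}^{k-1}(I-\eta A_k)\,\mathcal T[k,s+1]\,f_s \;-\; \eta f_k .
\end{align}
The one structural fact needed is the composition rule
\begin{align}
(I-\eta A_k)\,\mathcal T[k,s+1]=\mathcal T[k+1,s+1], \qquad s+1\le k .
\end{align}
This follows directly from the left-multiplied product ordering in \eqref{eq:discrete_propagator}. In the edge case $s+1=k$, it uses $\mathcal T[k,k]=I$, so that $(I-\eta A_k)=\mathcal T[k+1,k]$. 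The final term $-\eta f_k$ is then exactly the $s=k$ summand $-\eta\,\mathcal T[k+1,k+1]f_k$, again because $\mathcal T[k+1,k+1]=I$. Combining these yields $\delta_{k+1}=-\eta\sum_{s=0}^{k}\mathcal T[k+1,s+1]f_s$, which closes the induction.

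There is no genuine obstacle here. The only point requiring care is the bookkeeping of the propagator's ordering convention and its boundary case $k=s$, so that the newest forcing term slots in as the identity-transported summand. Since the $A_k$ are generally non-commuting, I will keep the products ordered exactly as defined and never reorder them.
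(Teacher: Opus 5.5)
Your proposal is correct and follows the paper's proof essentially step for step. Both argue by induction on $k$ with the empty-sum base case, substitute into the recurrence of \cref{thm:discrete_kv}, and use $(I-\eta A_k)\mathcal T[k,s+1]=\mathcal T[k+1,s+1]$ together with $\mathcal T[k+1,k+1]=I$ to absorb the new forcing term as the $s=k$ summand.
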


\begin{proof}
By induction on $k$. For $k = 0$, $\delta_0 = 0$ and the sum is empty. Assuming the formula holds for $k$, the recurrence \cref{eq:discrete_kv} gives
\begin{align}
\delta_{k+1} &= (I - \eta A_k)\,\delta_k - \eta f_k \\
&= (I - \eta A_k)\Bigl(-\eta \sum_{s=0}^{k-1} \mathcal{T}[k, s+1]\, f_s\Bigr) - \eta f_k \\
&= -\eta \sum_{s=0}^{k-1} \underbrace{(I - \eta A_k)\mathcal{T}[k, s+1]}_{= \mathcal{T}[k+1, s+1]} f_s - \eta \underbrace{I}_{= \mathcal{T}[k+1, k+1]} f_k \\
&= -\eta \sum_{s=0}^{k} \mathcal{T}[k+1, s+1]\, f_s,
\end{align}
which is the formula at $k+1$.
\end{proof}

%% ====================================================================
\section{Continuous-Time Kelvin--Voigt Framework}\label{sec:continuous}
%% ====================================================================

The Edge of Stability is inherently discrete~\citep{cohen2021gradient}, but developing the framework first in continuous time makes the qualitative change introduced by discretization transparent. In that setting, the difference between two nearby training trajectories satisfies a first-order linear variational equation, which we interpret as a Kelvin-Voigt-type constitutive law for algorithmic strain.

In continuous time (gradient flow), positive curvature is always stabilizing: larger eigenvalues of the Hessian cause faster contraction of perturbations, and only negative eigenvalues drive instability. Discretization breaks this monotonicity. With a finite step size $\eta$, the stability window becomes $[0, 2/\eta]$: eigenvalues above $2/\eta$ flip the sign of the one-step multiplier past $-1$ and cause oscillatory instability, even though they are positive. It is this upper boundary that gives rise to the Edge of Stability. The continuous-time derivations in this appendix parallel those of \cref{sec:discrete}, with the stability matrix $A(t)$ now defined as the path-averaged Hessian between two gradient-flow trajectories and the stress $f(t)$ as the gradient mismatch at the reference trajectory.

\subsection{Algorithmic Strain and Data Stress}

Consider two trajectories $w_S(t)$ and $w_{S'}(t)$, initialized at the same point $w_0$ and evolving under gradient flow on datasets $S$ and $S'$:
\begin{align}\label{eq:gradient_flow}
\partial_t w_S(t) = -\nabla L_S(w_S(t)), \qquad \partial_t w_{S'}(t) = -\nabla L_{S'}(w_{S'}(t)).
\end{align}
\begin{definition}[Algorithmic Strain and Data Stress]\label{def:strain_stress}
Let $w_S, w_{S'} : [0, T] \to \mathbb{R}^d$ solve \cref{eq:gradient_flow} with $w_S(0) = w_{S'}(0) = w_0$. The algorithmic strain and data stress are
\begin{align}
\delta(t) &\triangleq w_S(t) - w_{S'}(t), \label{eq:strain_def}\\
f(t) &\triangleq \nabla L_S(w_{S'}(t)) - \nabla L_{S'}(w_{S'}(t)). \label{eq:stress_def}
\end{align}
The strain measures the displacement between trajectories; the stress isolates the direct effect of the dataset change from the indirect effect of parameter displacement.
\end{definition}

\subsection{The Kelvin--Voigt Constitutive Equation}

The strain $\delta(t)$ satisfies a first-order linear ODE that arises by subtracting the two gradient flow equations. The gradient difference between the same loss $L_S$ evaluated at two nearby points is handled by the mean value theorem, which absorbs the displacement dependence into the stability matrix $A(t)$ and leaves the stress $f(t)$ as a pure forcing term.

\begin{definition}[Stability Matrix]\label{def:stability_matrix}
The stability matrix is the integral mean of the Hessian along the segment connecting the two trajectories:
\begin{align}\label{eq:stability_matrix}
A(t) \triangleq \int_0^1 \nabla^2 L_S\bigl(w_{S'}(t) + \tau(w_S(t) - w_{S'}(t))\bigr) \, d\tau.
\end{align}
\end{definition}

\begin{theorem}[Kelvin--Voigt Variational Equation]\label{thm:kelvin_voigt}
The algorithmic strain $\delta(t)$ satisfies the variational equation
\begin{align}\label{eq:kelvin_voigt}
\partial_t \delta(t) + A(t) \delta(t) = -f(t), \quad \delta(0) = 0.
\end{align}
This has the form of the constitutive equation for an anisotropic Kelvin--Voigt viscoelastic material, with $A(t)$ as the elastic modulus and $f(t)$ as the applied stress. The equation is state-dependent: $A(t)$ depends on $\delta(t)$ through the mean-value integration path.
\end{theorem}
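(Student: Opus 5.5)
The plan is to subtract the two gradient-flow equations and then use the fundamental theorem of calculus to turn the gradient difference of the single loss $L_S$ into the averaged Hessian $A(t)$ acting on $\delta(t)$. This mirrors exactly the proof of \cref{thm:discrete_kv}, with the forward-Euler step replaced by a time derivative.

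First, I will assume $L_S$ and $L_{S'}$ are $C^2$. Then $\nabla L_S$ and $\nabla L_{S'}$ are $C^1$, so the solutions $w_S$ and $w_{S'}$ on $[0,T]$ are $C^1$, and $\delta$ is differentiable. Differentiating \eqref{eq:strain_def} and using \eqref{eq:gradient_flow} gives
\begin{align}
\partial_t\delta(t) = -\nabla L_S(w_S(t)) + \nabla L_{S'}(w_{S'}(t)).
\end{align}
Next, I add and subtract $\nabla L_S(w_{S'}(t))$ to split this into two pieces:
\begin{align}
\partial_t\delta(t) = -\bigl(\nabla L_S(w_S(t))-\nabla L_S(w_{S'}(t))\bigr) - \bigl(\nabla L_S(w_{S'}(t))-\nabla L_{S'}(w_{S'}(t))\bigr).
\end{align}
By \eqref{eq:stress_def}, the second bracket is exactly $f(t)$.

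For the first bracket, I apply the fundamental theorem of calculus to $\tau\mapsto\nabla L_S(w_{S'}(t)+\tau\delta(t))$ on $[0,1]$. Its derivative is $\nabla^2 L_S(w_{S'}(t)+\tau\delta(t))\,\delta(t)$. Since $\delta(t)$ does not depend on $\tau$, it comes out of the integral, and the bracket equals $A(t)\delta(t)$ with $A(t)$ as in \eqref{eq:stability_matrix}. Substituting gives $\partial_t\delta + A\delta = -f$. The initial condition $\delta(0)=0$ follows from the common initialization $w_S(0)=w_{S'}(0)=w_0$.

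The remaining claims are interpretive. The Kelvin--Voigt reading is just a naming of the terms: $A(t)$ plays the elastic modulus and $f(t)$ the applied stress. State dependence holds because the integration path in $A(t)$ passes through $w_S(t)=w_{S'}(t)+\delta(t)$.

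The only delicate point is regularity. The integral form of the mean value theorem needs $\nabla^2L_S$ to be continuous along the segment, and the ODE solutions must exist on all of $[0,T]$. Both are covered by the standing $C^2$ assumption and the hypothesis in \cref{def:strain_stress} that the solutions exist on $[0,T]$. Beyond that the proof is a direct computation.
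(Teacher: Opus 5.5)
Your proposal is correct and follows essentially the same route as the paper: you subtract the two gradient flows, add and subtract $\nabla L_S(w_{S'}(t))$ to isolate $f(t)$, and use the integral form of the mean value theorem along the segment to write the remaining gradient difference as $A(t)\delta(t)$, with $\delta(0)=0$ from the common initialization. Your explicit remarks on the $C^2$ regularity and on existence of the solutions on $[0,T]$ are a slight tightening of what the paper leaves implicit.
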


\begin{proof}
Subtracting the gradient flow equations and decomposing gives
\begin{align}
\partial_t \delta &= -\nabla L_S(w_S) + \nabla L_{S'}(w_{S'}) \nonumber \\
&= -\nabla L_S(w_S) + \nabla L_S(w_{S'}) - \nabla L_S(w_{S'}) + \nabla L_{S'}(w_{S'}) \nonumber \\
&= -\bigl[\nabla L_S(w_S) - \nabla L_S(w_{S'})\bigr] - \underbrace{\bigl[\nabla L_S(w_{S'}) - \nabla L_{S'}(w_{S'})\bigr]}_{= f(t)}.
\end{align}
The first bracket is the gradient difference of the same objective at two points. By the mean value theorem for $C^2$ functions,
\begin{align}
\nabla L_S(w_S) - \nabla L_S(w_{S'}) = \Bigl(\int_0^1 \nabla^2 L_S\bigl(w_{S'} + \tau\,\delta\bigr)\,d\tau\Bigr)\delta = A(t)\,\delta(t).
\end{align}
Substituting yields 
\begin{align}
\partial_t \delta + A(t)\delta = -f(t). 
\end{align}
The initial condition $\delta(0)=0$ is immediate from $w_S(0) = w_{S'}(0)$.
\end{proof}

\begin{remark}[Scope of the Kelvin--Voigt Correspondence]\label{rem:kv_scope}
The classical Kelvin--Voigt model assumes $A(t) \succ 0$, so all perturbations decay. Here $A(t)$ can be indefinite during saddle-point traversal: negative eigenvalues drive strain growth, while the viscous term $\partial_t\delta$ provides dissipative competition. Instability requires sustained negative curvature, quantified by $\int \alpha_-(t)\,dt$ in \cref{thm:stability_bound}. In discrete time (\cref{sec:discrete}), positive curvature also has a stability boundary: eigenvalues exceeding $2/\eta$ cause oscillatory instability, a phenomenon absent in continuous time.
\end{remark}

The solution of the Kelvin--Voigt equation is expressed through the propagator $\mathcal{T}(t,s)$, which maps the state at time $s$ to the state at time $t$ under the homogeneous dynamics.

\begin{definition}[Parameter-Space Propagator]\label{def:propagator}
The propagator $\mathcal{T}(t, s)$ is the unique solution to
\begin{align}\label{eq:propagator_ode}
\begin{aligned}
\frac{\partial}{\partial t} \mathcal{T}(t, s) &= -A(t) \mathcal{T}(t, s), \quad \text{for } t \ge s, \\
\mathcal{T}(s, s) &= I_d.
\end{aligned}
\end{align}
The propagator satisfies the semigroup property $\mathcal{T}(t, r) = \mathcal{T}(t, s) \mathcal{T}(s, r)$ for any $t \ge s \ge r$, and the inverse relation $\mathcal{T}(t, s)^{-1} = \mathcal{T}(s, t)$.
\end{definition}

The variation-of-constants formula then expresses $\delta(t)$ as a convolution of past stresses with the propagator.

\begin{theorem}[Strain Propagation Integral]\label{thm:strain_integral}
The algorithmic strain $\delta(t)$ admits the integral representation
\begin{align}\label{eq:strain_integral}
\delta(t) = -\int_{0}^{t} \mathcal{T}(t, s) f(s) \, ds.
\end{align}
\end{theorem}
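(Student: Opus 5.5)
Since \cref{eq:kelvin_voigt} is a linear inhomogeneous ODE in $\delta$ with coefficient $A(t)$, we treat $A(t)$ as a given continuous matrix-valued function of time and apply variation of constants. This is legitimate even though $A(t)$ depends on $\delta(t)$ through the segment $w_{S'}+\tau\delta$: along the actual pair of gradient-flow trajectories, $A(t)$ is a fixed function of $t$. It is continuous because $L_S$ is $C^2$ and $w_S, w_{S'}$ are $C^1$ solutions of \cref{eq:gradient_flow}. The propagator $\mathcal T(t,s)$ of \cref{def:propagator} is then well defined by standard linear ODE theory. We take its semigroup property and its invertibility as given, together with the companion identity $\partial_s \mathcal T(t,s) = \mathcal T(t,s)A(s)$. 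This identity follows by differentiating $\mathcal T(t,s)\mathcal T(s,t)=I$ in $s$ and using $\partial_s \mathcal T(s,t) = -A(s)\mathcal T(s,t)$.

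The key step is to introduce $g(s)\triangleq \mathcal T(t,s)\delta(s)$ for fixed $t$ and $s\in[0,t]$. By the product rule and the companion identity,
\begin{align}
g'(s) = \mathcal T(t,s)A(s)\delta(s) + \mathcal T(t,s)\partial_s\delta(s) = \mathcal T(t,s)\bigl(A(s)\delta(s)+\partial_s\delta(s)\bigr) = -\mathcal T(t,s)f(s),
\end{align}
where the last equality uses \cref{thm:kelvin_voigt}. Integrating from $0$ to $t$ gives $g(t)-g(0) = -\int_0^t \mathcal T(t,s)f(s)\,ds$. Since $g(t)=\mathcal T(t,t)\delta(t)=\delta(t)$ and $g(0)=\mathcal T(t,0)\delta(0)=0$, this is exactly \cref{eq:strain_integral}.

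The step needing the most care is the derivative in the second argument, $\partial_s\mathcal T(t,s)=\mathcal T(t,s)A(s)$. \Cref{def:propagator} only specifies the $t$-derivative, so this identity has to be derived, and it requires $\mathcal T$ to be jointly $C^1$ in $(t,s)$. Joint smoothness follows from smooth dependence of linear ODE solutions on initial time when $A$ is continuous. If we want to avoid that, a fallback is to verify directly that the right-hand side $\Delta(t)\triangleq-\int_0^t \mathcal T(t,s)f(s)\,ds$ solves \cref{eq:kelvin_voigt}. Differentiating under the integral (Leibniz rule) gives $\partial_t\Delta = -f(t) - A(t)\Delta(t)$, and $\Delta(0)=0$. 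Uniqueness for linear ODEs with continuous coefficients then gives $\Delta=\delta$. This route needs only the $t$-derivative that is given in \cref{def:propagator}.
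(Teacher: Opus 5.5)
Your proof is correct, but your main argument runs in the opposite direction from the paper's.

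The paper takes the candidate $-\int_0^t \mathcal{T}(t,s)f(s)\,ds$ and differentiates it with the Leibniz rule, using only $\partial_t\mathcal{T}(t,s)=-A(t)\mathcal{T}(t,s)$. It checks that the candidate solves the Kelvin--Voigt equation with zero initial value, and then identifies it with $\delta$ by Picard--Lindel\"of uniqueness. That is exactly your fallback.

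Your primary argument is the integrating-factor derivation. Differentiating $g(s)=\mathcal{T}(t,s)\delta(s)$ turns the equation into $g'(s)=-\mathcal{T}(t,s)f(s)$. So the formula is derived from the ODE rather than checked against it. No separate uniqueness theorem is needed, because the computation itself shows that every solution with $\delta(0)=0$ equals the integral.

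The cost is the identity $\partial_s\mathcal{T}(t,s)=\mathcal{T}(t,s)A(s)$. Your derivation of it differentiates $\mathcal{T}(s,t)$ in its first argument with $s<t$. The definition is stated only for $t\ge s$, so it does not literally cover that case.

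A cleaner fix needs no joint $C^1$ regularity. Set $\Phi(t)\triangleq\mathcal{T}(t,0)$. The stated semigroup and inverse relations give $\mathcal{T}(t,s)=\Phi(t)\Phi(s)^{-1}$. Differentiating $\Phi(s)^{-1}$ using $\Phi'=-A\Phi$ then gives the identity at once.

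The same factorization covers a point you understate about the fallback. Exchanging $d/dt$ with $\int_0^t$ tacitly uses joint continuity of $\mathcal{T}$, and the paper's proof relies on this too. By contrast, $\int_0^t\mathcal{T}(t,s)f(s)\,ds=\Phi(t)\int_0^t\Phi(s)^{-1}f(s)\,ds$ can be differentiated using only the product rule and the fundamental theorem of calculus.

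Finally, you explicitly note that $A(t)$, although built from $\delta$, is a fixed continuous function of $t$ along the actual trajectories. That observation is what licenses linear-ODE reasoning in either route, and the paper leaves it implicit.
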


\begin{proof}
Differentiating \cref{eq:strain_integral} via the Leibniz rule:
\begin{align}
\frac{d}{dt} \delta(t) 
&= \frac{d}{dt} \left( -\int_{0}^{t} \mathcal{T}(t, s) f(s) \, ds \right) \\
&= -\mathcal{T}(t, t) f(t) - \int_{0}^{t} \frac{\partial}{\partial t} \mathcal{T}(t, s) f(s) \, ds.
\end{align}
The boundary term is $-f(t)$. Substituting $\partial_t\mathcal{T}(t,s) = -A(t)\mathcal{T}(t,s)$ into the integral term:
\begin{align}
- \int_{0}^{t} \frac{\partial}{\partial t} \mathcal{T}(t, s) f(s) \, ds 
&= - \int_{0}^{t} \bigl( -A(t) \mathcal{T}(t, s) \bigr) f(s) \, ds \\
&= A(t) \int_{0}^{t} \mathcal{T}(t, s) f(s) \, ds \\
&= -A(t) \delta(t).
\end{align}
Combining gives $\dot\delta = -f(t) - A(t)\delta(t)$, which is \cref{eq:kelvin_voigt}. The initial condition $\delta(0) = 0$ follows from the vanishing domain at $t=0$, and uniqueness from Picard--Lindel\"of.
\end{proof}

\subsection{KV Stability: The Role of Curvature}

With the strain integral in hand (\cref{thm:strain_integral}), bounding $\|\delta(t)\|$ reduces to bounding the propagator norm $\|\mathcal{T}(t,s)\|_{\mathrm{op}}$. A naive Gr\"onwall bound would use $\|A(\tau)\|_{\mathrm{op}}$, but this is unnecessarily pessimistic: it treats large positive eigenvalues as destabilizing, when in continuous time they cause faster contraction. The correct bound uses only the negative part of the spectrum. This is the fundamental difference between continuous and discrete time: in continuous time, instability requires negative curvature; in discrete time (\cref{sec:discrete}), curvature above $2/\eta$ is equally destabilizing.

\begin{definition}[Instantaneous Negative Curvature]\label{def:expansivity}
The instantaneous negative curvature is
\begin{align}\label{eq:expansivity}
\alpha_-(t) \triangleq \max \left\{ 0, \, -\lambda_{\min} \left( \frac{A(t) + A(t)^\top}{2} \right) \right\}.
\end{align}
When $A(t)$ is symmetric, this simplifies to $\alpha_-(t) = \max\{0, -\lambda_{\min}(A(t))\}$.
\end{definition}

\begin{theorem}[Effective Stability Bound]\label{thm:stability_bound}
For any $0 \le s \le t$, the operator norm of the propagator is controlled by the cumulative negative curvature:
\begin{align}\label{eq:propagator_bound}
\|\mathcal{T}(t, s)\|_{\mathrm{op}} \le \exp \left( \int_{s}^{t} \alpha_-(\tau) \, d\tau \right).
\end{align}
Consequently, the algorithmic strain satisfies the bound
\begin{align}\label{eq:strain_bound}
\|\delta(t)\|_2 \le \int_{0}^{t} \exp \left( \int_{s}^{t} \alpha_-(\tau) \, d\tau \right) \|f(s)\|_2 \, ds.
\end{align}
In particular, if the loss landscape is locally convex along the trajectory (so that $\alpha_-(t) = 0$ for all $t$), then the propagator is a contraction and $\|\delta(t)\|_2 \le \int_0^t \|f(s)\|_2 \, ds$.
\end{theorem}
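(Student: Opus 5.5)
The plan is to bound the propagator norm with an energy (Grönwall-type) argument on $\|\mathcal{T}(t,s)v\|_2^2$ that only sees the symmetric part of $A$. The strain bound then follows from \cref{thm:strain_integral}. Fix $s$ and a vector $v$, and set $z(t)\triangleq \mathcal{T}(t,s)v$, so that $\dot z=-A(t)z$ and $z(s)=v$ by \cref{def:propagator}. Differentiating the squared norm gives
\begin{align}
\frac{d}{dt}\|z(t)\|_2^2 = -2\,z^\top A(t) z = -2\,z^\top\Bigl(\tfrac{A(t)+A(t)^\top}{2}\Bigr)z \le 2\,\alpha_-(t)\,\|z(t)\|_2^2 .
\end{align}
The inequality uses the Rayleigh quotient bound $z^\top S z\ge \lambda_{\min}(S)\|z\|^2\ge -\alpha_-(t)\|z\|^2$ for the symmetric part $S$, together with \cref{def:expansivity}. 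The key point is that the positive eigenvalues enter with a favorable sign, so we simply discard them; this is what makes the bound better than the naive $\|A\|_{\mathrm{op}}$ Grönwall bound.

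Next I apply scalar Grönwall to $\phi(t)=\|z(t)\|_2^2$. Multiplying by the integrating factor $\exp(-2\int_s^t\alpha_-)$ shows that the product is nonincreasing, so
\begin{align}
\|z(t)\|_2^2\le \exp\Bigl(2\int_s^t\alpha_-(\tau)\,d\tau\Bigr)\|v\|_2^2 .
\end{align}
Taking square roots and the supremum over unit $v$ yields \eqref{eq:propagator_bound}. Some regularity is needed here. $A(t)$ is continuous in $t$ because $L_S\in C^2$ and the trajectories are continuous. Since $\lambda_{\min}$ of a symmetric matrix is continuous (indeed $1$-Lipschitz in operator norm), $\alpha_-$ is continuous. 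Hence $\alpha_-$ is integrable and the differential inequality holds classically.

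For \eqref{eq:strain_bound}, I take norms in the representation $\delta(t)=-\int_0^t\mathcal{T}(t,s)f(s)\,ds$ from \cref{thm:strain_integral}. I then use $\|\int g\|\le\int\|g\|$, then $\|\mathcal{T}(t,s)f(s)\|_2\le\|\mathcal{T}(t,s)\|_{\mathrm{op}}\|f(s)\|_2$, and finally insert the propagator bound. The convex special case is immediate: if $\alpha_-\equiv0$, the exponential equals $1$, so $\|\mathcal{T}(t,s)\|_{\mathrm{op}}\le1$ (a contraction) and $\|\delta(t)\|_2\le\int_0^t\|f(s)\|_2\,ds$.

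The computation is short. The step I expect to need the most care is justifying that $A(t)$ is well defined and continuous. $A(t)$ depends on $\delta(t)$ itself, but it is a fixed continuous matrix function once both trajectories are given, so the propagator ODE is linear with continuous coefficients and has the unique solution assumed in \cref{def:propagator}. This requires the gradient flows to exist on $[0,T]$, which I take as part of the standing setup of \cref{def:strain_stress}.
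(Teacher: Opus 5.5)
Your proposal is correct and follows essentially the same route as the paper: an energy estimate on $\|\mathcal{T}(t,s)v\|_2^2$ in which only the symmetric part of $A(t)$ contributes, then Grönwall's inequality, a supremum over unit $v$, and the triangle inequality applied to the strain integral of \cref{thm:strain_integral}. Your added remarks on the continuity of $A(t)$ and $\alpha_-$, and on existence of the propagator, make explicit regularity points that the paper leaves implicit.
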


\begin{proof}
Fix a unit vector $v$ and set $u(\tau) = \mathcal{T}(\tau,s)v$, $\psi(\tau) = \|u(\tau)\|_2^2$. The propagator equation $\partial_\tau u = -A(\tau)u$ gives
\begin{align}
\frac{d\psi}{d\tau} = -2\,u(\tau)^\top A(\tau)\,u(\tau).
\end{align}
Writing $A = H + S$ with $H$ symmetric and $S$ skew-symmetric, the skew part drops out of the quadratic form, so
\begin{align}
u^\top A\,u = u^\top H\,u \ge \lambda_{\min}(H)\,\|u\|_2^2,
\end{align}
and therefore $\dot\psi \le 2\alpha_-(\tau)\,\psi(\tau)$. Gr\"onwall's inequality with $\psi(s)=1$ gives
\begin{align}
\psi(t) \le \exp\!\Bigl(2\int_s^t \alpha_-(\tau)\,d\tau\Bigr).
\end{align}
Taking square roots and supremizing over unit $v$ yields \cref{eq:propagator_bound}. The strain bound follows from the triangle inequality applied to \cref{eq:strain_integral}:
\begin{align}
\|\delta(t)\|_2
\le \int_0^t \|\mathcal{T}(t,s)\|_{\mathrm{op}}\,\|f(s)\|_2\,ds
\le \int_0^t \exp\!\Bigl(\int_s^t \alpha_-(\tau)\,d\tau\Bigr)\|f(s)\|_2\,ds.
\end{align}
\end{proof}

In convex regions ($A(t) \succeq 0$), the propagator is a contraction and instability arises only from non-convex portions of the trajectory. This is where continuous and discrete time diverge. In the discrete setting of \cref{sec:discrete}, explicit time-stepping introduces a finite stability window $[0,2/\eta]$: eigenvalues above $2/\eta$ cause oscillatory instability even though they are positive. It is this upper boundary that gives rise to the Edge of Stability.

%% ====================================================================
\section{Transverse edge normal form for two-layer linear networks}\label{app:linear_Q}
%% ====================================================================

This appendix proves \cref{prop:linear_Q} by computing $\mathcal P$ for the two-layer linear network loss
\begin{align}
L_h(W_1, W_2) = \tfrac{1}{2}\|W_2 W_1 - M\|_F^2
\end{align}
at a balanced global minimizer. Because the network is linear, all derivatives of $L_h$ can be computed in closed form, which makes this a concrete test case for the general bifurcation theory of \cref{thm:intrinsic_edge_potential,cor:generic_edge_branches}.

The argument proceeds in four steps. First, we identify a canonical minimizer using the SVD of the target matrix $M$. Second, we compute the Hessian kernel, which consists of reparametrization symmetries that do not change the product $W_2 W_1$. Third, we restrict the edge coupling theory to the orthogonal complement of this kernel (the normal space $\mathcal{N}$), and show that this restriction is independent of the hidden width $h$. Fourth, we evaluate the branch form $\mathcal Q$ along the leading eigenvector and find that it is negative, giving a supercritical bifurcation.

\paragraph{Canonical balanced minimizer.}
The starting point is the SVD of the target matrix. Let
\begin{align}
M = U_r \Sigma V_r^\top,
\qquad
\Sigma=\operatorname{diag}(\sigma_1,\dots,\sigma_r),
\qquad
\sigma_1\ge\cdots\ge\sigma_r>0.
\end{align}
Every balanced global minimizer has the form
\begin{align}
\bar W_2=U_r\Sigma^{1/2}R^\top,
\qquad
\bar W_1=R\Sigma^{1/2}V_r^\top,
\qquad
R\in\mathbb R^{h\times r},
\quad
R^\top R=I_r .
\end{align}
Since the loss and the Frobenius metric are invariant under orthogonal changes of output, input, and hidden coordinates, we may choose coordinates so that
\begin{align}
M=
\begin{pmatrix}
\Sigma & 0\\
0 & 0
\end{pmatrix},
\qquad
\bar W_1^{(h)}=
\begin{pmatrix}
\Sigma^{1/2} & 0\\
0 & 0
\end{pmatrix},
\qquad
\bar W_2^{(h)}=
\begin{pmatrix}
\Sigma^{1/2} & 0\\
0 & 0
\end{pmatrix}.
\end{align}

\paragraph{Kernel and normal slice.}
The Hessian at $\bar w$ has a nontrivial kernel because many different factorizations $W_2 W_1 = M$ achieve the same product and hence the same loss. These are the reparametrization symmetries of the network. The center-reduction \cref{thm:intrinsic_edge_potential} requires a nondegenerate Hessian, so we must identify the kernel $\mathcal{K}$ and restrict to its orthogonal complement $\mathcal{N}$, where the curvature is nonzero and the bifurcation analysis applies. Write perturbations as
\begin{align}
\delta W_1=
\begin{pmatrix}
A & B\\
C & D
\end{pmatrix},
\qquad
\delta W_2=
\begin{pmatrix}
E & F\\
G & H
\end{pmatrix},
\end{align}
where $A \in \mathbb{R}^{r \times r}$, $B \in \mathbb{R}^{r \times (d-r)}$, etc., matching the block structure of $\bar W_1^{(h)}$. Define the product map $F_h(W_1,W_2)\triangleq W_2W_1$. At a global minimizer the residual vanishes, so the Hessian factors as
\begin{align}
\nabla^2L_h(\bar w)=DF_h(\bar w)^\ast DF_h(\bar w).
\end{align}
A direct computation gives the linearization and its kernel:
\begin{align}
DF_h(\bar w)[\delta W_1,\delta W_2]
&=
\bar W_2\delta W_1+\delta W_2\bar W_1
=
\begin{pmatrix}
\Sigma^{1/2}A+E\Sigma^{1/2} & \Sigma^{1/2}B\\
G\Sigma^{1/2} & 0
\end{pmatrix},\\
\mathcal K&=\ker\nabla^2L_h(\bar w)=\ker DF_h(\bar w) \\
&=
\left\{
\begin{array}{l}
B=0,\quad G=0,\quad \Sigma^{1/2}A+E\Sigma^{1/2}=0,\\[0.3ex]
C,D,F,H\ \text{arbitrary}
\end{array}
\right\}.
\end{align}

The image of the linearization determines what directions in the output space the factorization can explore:
\begin{align}
\operatorname{im} DF_h(\bar w)
=
\left\{
\begin{pmatrix}
Z & X\\
Y & 0
\end{pmatrix}
:
Z\in\mathbb R^{r\times r},\
X\in\mathbb R^{r\times(d-r)},\
Y\in\mathbb R^{(p-r)\times r}
\right\},
\end{align}
which is the tangent space $T_M\mathcal R_r$ of the rank-$r$ matrix manifold at $M$. Therefore $DF_h(\bar w)$ has rank $r(p+d-r)$, so by the constant-rank theorem the minimum set
\begin{align}
\mathcal M_h(M)\triangleq \{(W_1,W_2):W_2W_1=M\}
\end{align}
is a smooth manifold near $\bar w$ with tangent space $T_{\bar w}\mathcal M_h(M)=\mathcal K$. Thus $\bar w$ is Morse--Bott.

The kernel $\mathcal{K}$ consists of flat directions that do not affect the loss. To apply the edge coupling theory, we restrict to the orthogonal complement $\mathcal{N} = \mathcal{K}^\perp$, which captures the directions with nonzero curvature. Define
\begin{align}
T(A,E)\triangleq \Sigma^{1/2}A+E\Sigma^{1/2}.
\end{align}
Its adjoint is
\begin{align}
T^\ast(Y)=\bigl(\Sigma^{1/2}Y,\;Y\Sigma^{1/2}\bigr),
\end{align}
so
\begin{align}
(\ker T)^\perp=\operatorname{ran} T^\ast
=
\left\{(\Sigma^{1/2}Y,\;Y\Sigma^{1/2}) : Y\in\mathbb R^{r\times r}\right\}.
\end{align}
Combining the $(\ker T)^\perp$ constraint on the $(A,E)$-block with unconstrained $B$ and $G$ blocks (and the zero blocks from the kernel conditions $C=D=F=H=0$) gives the full normal space:
\begin{align}
\mathcal N
=
\left\{
\delta W_1=
\begin{pmatrix}
\Sigma^{1/2}Y & B\\
0 & 0
\end{pmatrix},
\qquad
\delta W_2=
\begin{pmatrix}
Y\Sigma^{1/2} & 0\\
G & 0
\end{pmatrix}
:
Y,B,G
\right\}.
\end{align}

\paragraph{Exact width-invariance of the restricted loss.}
The central observation is that the restricted loss on $\mathcal{N}$ is independent of the hidden width $h$. Increasing $h$ beyond the rank $r$ adds only flat directions in $\mathcal{K}$, so the loss restricted to $\mathcal{N}$ is the same for any $h \ge r$. We now establish this by constructing an explicit isometry between the normal slices at different widths. For $a=(Y,B,G)$, define
\begin{align}
T_h(a)\triangleq
\left(
\begin{pmatrix}
\Sigma^{1/2}Y & B\\
0 & 0
\end{pmatrix},
\begin{pmatrix}
Y\Sigma^{1/2} & 0\\
G & 0
\end{pmatrix}
\right)\in\mathcal N.
\end{align}
The linearization and quadratic residual along this normal slice are
\begin{align}
P_h(a)
&\triangleq
DF_h(\bar w)T_h(a)
=
\begin{pmatrix}
\Sigma Y+Y\Sigma & \Sigma^{1/2}B\\
G\Sigma^{1/2} & 0
\end{pmatrix},\\
Q_h(a)
&\triangleq
\delta W_2\,\delta W_1
=
\begin{pmatrix}
Y\Sigma Y & Y\Sigma^{1/2}B\\
G\Sigma^{1/2}Y & GB
\end{pmatrix},
\end{align}
so the restricted loss takes the form
\begin{align}
L_h(\bar w+T_h(a))
=
\frac12\|P_h(a)+Q_h(a)\|_F^2.
\end{align}

The key point is that neither $P_h(a)$ nor $Q_h(a)$ depends on $h$: the extra rows and columns added by increasing the hidden width are all zero. To make this formal, consider the minimal-width model ($h = r$) at the balanced minimizer
\begin{align}
\bar W_1^{(r)}=(\Sigma^{1/2}\ \ 0),
\qquad
\bar W_2^{(r)}=\binom{\Sigma^{1/2}}{0},
\end{align}
with normal-slice embedding
\begin{align}
T_r(a)\triangleq
\left(
(\Sigma^{1/2}Y\ \ B),
\binom{Y\Sigma^{1/2}}{G}
\right).
\end{align}
The restricted loss at width $r$ produces the same $P_h$ and $Q_h$, so
\begin{align}
L_r(\bar w^{(r)}+T_r(a))
=
\frac12\|P_h(a)+Q_h(a)\|_F^2
=
L_h(\bar w+T_h(a)).
\end{align}
The zero-padding map $Z_h \triangleq T_h T_r^{-1}:\mathcal N^{(r)}\to \mathcal N^{(h)}$ is therefore an isometric isomorphism, and
\begin{align}
L_h(\bar w^{(h)}+Z_h\xi)=L_r(\bar w^{(r)}+\xi)
\qquad
(\xi\in\mathcal N^{(r)}).
\end{align}
The restricted losses coincide, and so do all their derivatives. Width-invariance of $\mathcal P$ and its quartic term follows.

\paragraph{Transverse spectrum.}
To determine the critical learning rate at which period-doubling first occurs, we need to know which eigenvalue of the restricted Hessian is largest. Since the residual vanishes at the minimizer, the Hessian factors as $\nabla^2L_h(\bar w)=DF_h(\bar w)^\ast DF_h(\bar w)$. The transverse Hessian quadratic form then decomposes into three orthogonal blocks corresponding to the $Y$, $B$, and $G$ components of the normal space:
\begin{align}
q(Y,B,G)
=
\|\Sigma Y+Y\Sigma\|_F^2
+
\|\Sigma^{1/2}B\|_F^2
+
\|G\Sigma^{1/2}\|_F^2.
\end{align}
Evaluating on the matrix units $E_{ij}$ in the $Y$-block,
$E_{i\beta}$ in the $B$-block, and $E_{\alpha j}$ in the $G$-block gives the eigenvalues and full spectrum:
\begin{align}
\lambda(E_{ij})&=\sigma_i+\sigma_j,
\qquad
\lambda(E_{i\beta})=\sigma_i,
\qquad
\lambda(E_{\alpha j})=\sigma_j,\\
\operatorname{spec}(H^\perp)
&=
\{\sigma_i+\sigma_j:1\le i,j\le r\}
\cup
\{\sigma_i:1\le i\le r\}^{\times(p+d-2r)},
\end{align}
independent of $h$.

\paragraph{Exact sharp-mode normal form.}
As $\eta$ increases from zero, the reduced Hessian $H^\perp - (2/\eta)I$ first becomes singular when $2/\eta$ hits the largest eigenvalue $2\sigma_1$, which occurs at $\eta_c = 1/\sigma_1$. This is the critical learning rate for the first period-doubling bifurcation. Assuming $\sigma_1 > \sigma_2$ so that this eigenvalue is simple, the corresponding unit eigenvector is
\begin{align}
u_c = T_h\!\left(\frac{E_{11}}{\sqrt{2\sigma_1}},\,0,\,0\right),
\end{align}
which corresponds to the perturbation $\delta W_1=\delta W_2=\frac1{\sqrt2}E_{11}$ on the active block with zeros elsewhere. Restricting the loss to this line and expanding gives
\begin{align}
L_h(\bar w+t u_c)
=
\frac12\Bigl(\sqrt{2\sigma_1}\,t+\frac12 t^2\Bigr)^2
=
L_h(\bar w)
+\sigma_1 t^2
+\frac{\sqrt{\sigma_1}}{\sqrt2}t^3
+\frac18 t^4.
\end{align}
Reading off the third and fourth derivatives and applying \cref{prop:quartic_jet} to this one-dimensional restricted loss yields the quartic term:\begin{align}
\mathcal Q^\perp(u_c)
=
\frac{3}{6}
-
\frac{(3\sqrt{2\sigma_1})^2}{2(2\sigma_1)}
=
-4.
\end{align}
Substituting into the reduced functional gives
\begin{align}
\Phi_{\eta}^{\perp}(tu_c)
&=
L_h(\bar w)
+\frac12\Bigl(2\sigma_1-\frac2\eta\Bigr)t^2
+\frac14\,\mathcal Q^\perp(u_c)\,t^4
+o(t^4)\\
&=
L_h(\bar w)
+\Bigl(\sigma_1-\frac1\eta\Bigr)t^2
-
t^4
+o(t^4).
\end{align}
The quartic coefficient is $-1 < 0$, so by \cref{cor:generic_edge_branches} the period-two orbit exists for $\eta > \eta_c = 1/\sigma_1$ and grows continuously from zero amplitude at $\eta_c$.
\qed

\paragraph{Experimental validation.}
\autoref{fig:bifurcation} validates the prediction for a two-layer linear network ($p = 5$, $h = 3$, $d = 10$, rank-3 target, $n = 200$ samples). At the global minimizer $\bar w$, the computed quartic term satisfies $\mathcal Q^\perp(u_c) < 0$, predicting that the period-doubling branch appears for $\eta>\eta_c$ and emerges continuously from zero at $\eta_c$. Full-batch GD at learning rates $\eta$ near $\eta_c = 1/\sigma_1$ confirms this: the oscillation amplitude grows continuously from zero for $\eta > \eta_c$ and tracks the $\sqrt{\eta - \eta_c}$ scaling of \cref{cor:generic_edge_branches}, with oscillations concentrated along~$u_c$.

% \newpage
% \input{checklist.tex}

\end{document}